\documentclass{article} 
\usepackage{iclr2024_conference,times}


\usepackage{amsmath,amsfonts,bm}









\def\eqref#1{equation~\ref{#1}}









\def\1{\bm{1}}










\DeclareMathAlphabet{\mathsfit}{\encodingdefault}{\sfdefault}{m}{sl}
\SetMathAlphabet{\mathsfit}{bold}{\encodingdefault}{\sfdefault}{bx}{n}













\usepackage{url}
\usepackage[utf8]{inputenc} 
\usepackage[T1]{fontenc}    
\usepackage{url}            
\usepackage{booktabs}       
\usepackage{hyperref}       
\usepackage{amsfonts}       
\usepackage{nicefrac}       
\usepackage{microtype}      
\usepackage{xcolor}         

\usepackage{amsmath}
\usepackage{amssymb}
\usepackage{amsthm}
\usepackage[titlenumbered, linesnumbered, ruled]{algorithm2e}

\usepackage{multirow}
\usepackage{paralist}
\usepackage{graphicx}
\usepackage{makecell}


\theoremstyle{plain}
\newtheorem{theorem}{Theorem}
\newtheorem{proposition}[theorem]{Proposition}
\newtheorem{lemma}[theorem]{Lemma}

\newtheorem{definition}[theorem]{Definition}
\newtheorem{assumption}[theorem]{Assumption}
\newtheorem{example}[theorem]{Example}
\newtheorem{property}[theorem]{Property}
\newtheorem{remark}[theorem]{Remark}

\newcommand{\entity}{\mathcal{E}}
\newcommand{\relation}{\mathcal{R}}
\newcommand{\efo}{{\textsc{EFO}_1}}

\newcommand{\tf}{{\textsc{TF}}}

\DeclareMathOperator{\diag}{{\rm diag}}

\title{Rethinking Complex Queries on Knowledge Graphs with Neural Link Predictors}


\author{Hang Yin\\
   Department of Mathematical Sciences\\
   Tsinghua University \\
   \texttt{h-yin20@mails.tsinghua.edu.cn} \\
   \And
    Zihao Wang \\
  Department of CSE\\
  HKUST\\
  \texttt{zwanggc@cse.ust.hk} \\
  \And
  Yangqiu Song\\
  Department of CSE \\
  HKUST\\
  \texttt{yqsong@cse.ust.hk}
}

%

\iclrfinalcopy 
\begin{document}

\maketitle

\begin{abstract}
Reasoning on knowledge graphs is a challenging task because it utilizes observed information to predict the missing one. Particularly, answering complex queries based on first-order logic is one of the crucial tasks to verify learning to reason abilities for generalization and composition.
Recently, the prevailing method is query embedding which learns the embedding of a set of entities and treats logic operations as set operations and has shown great empirical success. Though there has been much research following the same formulation, many of its claims lack a formal and systematic inspection. In this paper, we rethink this formulation and justify many of the previous claims by characterizing the scope of queries investigated previously and precisely identifying the gap between its formulation and its goal, as well as providing complexity analysis for the currently investigated queries. Moreover, we develop a new dataset containing ten new types of queries with features that have never been considered and therefore can provide a thorough investigation of complex queries. Finally, we propose a new neural-symbolic method, Fuzzy Inference with Truth value (FIT), where we equip the neural link predictors with fuzzy logic theory to support end-to-end learning using complex queries with provable reasoning capability. Empirical results show that our method outperforms previous methods significantly in the new dataset and also surpasses previous methods in the existing dataset at the same time.
\end{abstract}

\section{Introduction}

Knowledge graph (KG) is a mighty knowledge base that encodes relational knowledge into a graph representation. However, due to the fact that modern knowledge graphs are often auto-generated~\citep{toutanova_observed_2015} or constructed by crowd-sourcing~\citep{vrandecic_wikidata_2014}, they are considered noisy and incomplete, which is also known as the Open World Assumption (OWA)~\citep{libkin_open_2009}. 
Complex query answering (CQA) on knowledge graphs is a practical task that can support many applications~\citep{ren_neural_2023,wang_logical_2022}. The CQA task requires answering the existential first order logic formula, involving logical operators, conjunction ($\land$), disjunction ($\lor$), and negation ($\lnot$), as well as the existential quantifier  ($\exists$).
In particular, as CQA is based on KGs with OWA, it should perform reasoning, which utilizes available knowledge to predict the missing one where traditional traversal methods are doomed to fail~\citep{ren_query2box_2020}.

To tackle this challenge, the query embedding method has been proposed~\citep{hamilton_embedding_2018}, which aims to represent a set of entities by a low dimensional embedding. In addition to that, the logical formula is represented in an operator tree form~\citep{ren_query2box_2020, wang_benchmarking_2021}, in which the logic operations are replaced with corresponding set operations. Especially, the existential quantifier induces a new set operation, set projection, which corresponds to the logic skolemization~\citep{luus_logic_2021}. 
Though there has been a line of research~\citep{choudhary_probabilistic_2021,bai_query2particles_2022,yang_gammae_2022,wang_wasserstein-fisher-rao_2023,hu_fedcqa_2024} following the same approach, this kind of methodology still lacks detailed inspection of its logic soundness or model expressiveness. Moreover, despite the operator tree form pushing a strict constraint on the solvable formulas, the real scope of the solvable logic formulas has never been estimated, and nowadays datasets are therefore highly restricted. In general, not much theoretical progress has been made to inspect on nowadays CQA models.

In this paper, we first review the inherent drawbacks of the existing prevailing operator tree form representation of query embedding approaches and clearly characterize the query family that has been investigated as Tree-Form ($\tf$) queries. Then we extend our scope of research to the whole family of {\it Existential First Order queries with
a single free variable} ($\efo$). Particularly, we represent queries as general multigraphs.
Then we develop a new dataset containing ten new formulas which can not be represented in prevailing frameworks. Along with that, we propose a simple yet empirically effective algorithm which combines neural link predictors with strict fuzzy logic definition, and thus has strong theoretical guarantees. The algorithm is able to systematically infer $\efo$ queries of arbitrary complexity given any neural link predictor. Finally, we show that our algorithm is able to outperform existing methods in both our newly developed dataset and existing dataset. Our code and data can be found at~\url{https://github.com/HKUST-KnowComp/FIT}.

\section{Preliminary}

\subsection{Knowledge graphs}
Given a set of entities $\entity$ and a set of relations $\relation$, a knowledge graph $\mathcal{G}$ encapsulates real-world knowledge as a collection of factual triples $\mathcal{G} = \{(a_i,r_i,b_i)\}$, in each triple,  $a_i,b_i \in \entity$ is the head entity and tail entity correspondingly, and $r_i\in \relation$ is the relation among them. Based on OWA, the \textbf{observed} knowledge graph $\mathcal{G}_{o}$ is only part of the \textbf{complete} knowledge graph $\mathcal{G}$, meaning $\mathcal{G}_{o} \subsetneq \mathcal{G}$.

\subsection{$\efo$ queries and answers}
Existing discussions emphasize logical queries without universal quantifiers~\citep{ren_beta_2020}.
Such queries are formally defined as Existential First Order queries with a single free variable ($\efo$) under the strict first-order logic theory. We provide a minimum set of definitions to characterize the $\efo$ queries on knowledge graphs following standard logic theory~\citep{marker_model_2006}.

\begin{definition}[Terms]
    A term is either a variable $x$ or an entity $a\in \entity$.
\end{definition}

\begin{definition}[Atomic Formula]\label{def:atomic-formula}
    An atomic formula is of form $\phi= r(h, t)$,  $r\in \relation$ , $h$ and $t$ are terms. 
\end{definition}

\begin{definition}[Existential First Order Formula]\label{def:formula}
The set of the existential formulas is the smallest set $\Phi$ that satisfies the following property:
\begin{compactitem}
    \item[(i)] For atomic formula $r(a,b)$, itself and its negation $r(a,b), \lnot r(a,b)\in \Phi$
    \item[(ii)] If $\phi, \psi\in \Phi$, then $(\phi\land \psi),(\phi \lor \psi) \in \Phi$
    \item[(iii)] If $\phi \in \Phi$ and $x_i$ is any variable, then $\exists x_i \phi \in \Phi$.
\end{compactitem}
\end{definition}

We say a variable $y$ is bounded if there is an associated quantifier, otherwise, it is free. We use $\phi(y)$ to indicate the formula $\phi$ contains a free variable $y$. Then we finish the definition of the $\efo$ formula.

\begin{remark}[Query and sentence]
    When a formula contains at least one free variable, it is also called a query. Otherwise, the formula can be called a sentence. 
\end{remark}
\begin{definition}[Substitution]
    For a $\efo$ formula $\phi(y)$, for any entity $a\in \entity$, we write $\phi(a/y)$ or simply $\phi(a)$, for the result of simultaneously replacing all free occurrence of $y$ in $\phi$ by $a$.
\end{definition}

Then, we are ready to define $\efo$ queries and the answer set.
\begin{definition}[The Answer Set of $\efo$ Query]\label{def:efo1-query}
    The answer set of an $\efo$ query is defined by
    \begin{align}
        \mathcal{A}[\phi(y)] = \{a\in \entity|\text{ }\phi(a) \text{ is True}\}
    \end{align}
\end{definition}

All $\efo$ formulas can be converted to Disjunctive Normal Form (DNF), specifically:
\begin{definition}[Disjunctive Normal Form]
    The disjunctive normal form $\phi_{\rm DNF}$ of an $\efo$ formula is
    \begin{align}\label{eq:efo-in-dnf}
        \phi_{\rm DNF}(y)=  \gamma_1(y) \lor \dots \lor \gamma_m(y),
    \end{align}
    where $\gamma_j(y) = \exists x_1,\cdots, x_k. \alpha_{j1} \land \dots \land \alpha_{j n_j}$, $j = 1,...,m$, where $x_i$, $i=1,...,k$ are existential variables, $y$ is the single free variable, $\alpha_{j\cdot}$  are atomic formulas or the negation of atomic formulas.
\end{definition}
There are many ways to represent a $\efo$ formula, however, the DNF has gained dominant priority in lines of previous research~\citep{ren_beta_2020,wang_benchmarking_2021,zhang_cone_2021}. Converting the original query $\phi(y)$ into a DNF query $\phi_{\rm DNF}(y)$ decomposes the answer set $\mathcal{A}[\phi_{\rm DNF}(y)]$ into the union of the answer sets of $\mathcal{A}[\gamma_j(y)]$, which has become a common practice in recent research~\citep{wang_logical_2023}.
We consider $\efo$ queries in DNF throughout this paper.

\section{The limitation of previous formulation}\label{sec:tf-to-efo}

In this section, we aim to answer the question: Can general $\efo$ queries be precisely represented by previous formulations in syntax? In recent years, a huge number of research~\citep{ren_beta_2020,wang_benchmarking_2021,zhang_cone_2021, ren_fact_2023} have all made such a strong claim. However, derived from the design of the previous method, we are able to characterize the query family that has been investigated as Tree-Form ($\tf$) queries and provide it with a formal definition. Our discussion shows that the $\tf$ query family is not even a subset of $\efo$. Therefore, our discussion justifies the limitation of previous works by showing their unrigorous formulation.

\subsection{The syntactical closure of previous formulation: Tree-form queries}
The query types in the existing datasets~\citep{ren_beta_2020,wang_benchmarking_2021}, though targeted to $\efo$ query family, are selected with \textit{bias} when it comes to the empirical evaluation. 
The reason is that the dominating way of addressing logical queries is to simulate logical reasoning as the execution of set operators on an \textbf{operator tree}~\citep{wang_benchmarking_2021,xu_neural-symbolic_2022}, where each node represents a set of entities corresponding to the answer set of a sub-query. The logical connectives are transformed into operator nodes for set projections, intersection, union, and complement~\citep{wang_benchmarking_2021}. Particularly, the set projections are derived from the Skolemization of predicates~\citep{luus_logic_2021}.

We characterize the expressiveness of operator tree method by providing the formal definition of the $\tf$ query, instead of confusing them with $\efo$ queries as the previous practices~\citep{wang_benchmarking_2021}.

\begin{definition}[Tree-Form Query]\label{def:TF query}
The set of the Tree-Form queries is the smallest set $\Phi_\tf$ such that: 
\begin{compactitem}
\item[(i)] If $\phi(y) = r(a,y)$, where $a\in \entity$, then $\phi(y) \in \Phi_\tf$; 
\item[(ii)] If $\phi(y)\in \Phi_\tf, \lnot \phi(y) \in \Phi_\tf$; 
\item[(iii)] If $\phi(y), \psi(y)\in \Phi_\tf$, then $(\phi\land \psi)(y) \in \Phi_\tf$ and $(\phi \lor \psi)(y)\in \Phi_\tf$;
\item[(iv)] If $\phi(y) \in \Phi_\tf$ and $y^{\prime}$ is any variable, then $\psi(y^{\prime})= \exists  y. r(y,y^{\prime})\land \phi(y) \in \Phi_\tf$.    
\end{compactitem}
\end{definition}

Figure~\ref{fig:pni} is an example of a Tree-Form query that appeared in the widely used dataset~\citep{ren_beta_2020}. The formal derivation of the definition of $\tf$ can be found in the Appendix~\ref{app:tree form derivation}.

\begin{figure}   
    \vspace{-3em}
    \begin{minipage}{0.48\textwidth}
     \centering
     \includegraphics[width=\linewidth]{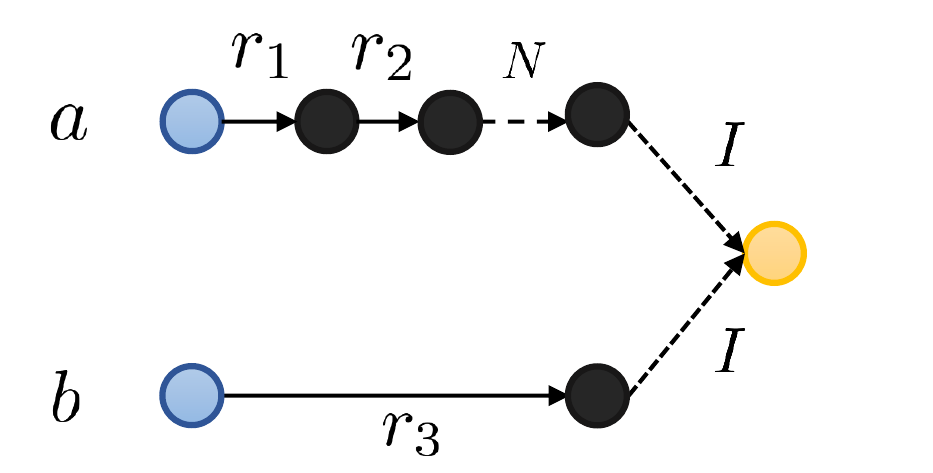}
     \caption{Representation of the tree form query ``pni''. We note that this kind of representation requires explicit set operators in the graph, the corresponding lines are dotted.}\label{fig:pni}
   \end{minipage}\hfill
   \begin{minipage}{0.48\textwidth}
     \centering
     \includegraphics[width=\linewidth]{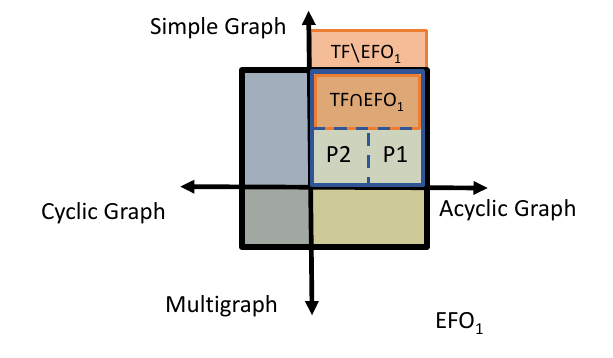}
     \caption{A diagram for the differences between the Tree-Form queries (orange blocks) and the $\efo$ queries (black box). $\efo$ queries are categorized by their query graphs. Some Tree-Form queries are not of $\efo$.}\label{fig:query-diagram}
   \end{minipage}
    \vspace{-1em}
\end{figure}

However, this biased selection of query types \textbf{deviates} from the original goal of complex query answering~\citep{ren_beta_2020}. It imposes strong assumptions for the formula~\citep{wang_benchmarking_2021}, so a large part of first-order queries, even only with existential quantifiers, is ignored in both existing datasets and the solving strategies.
Moreover, it is questionable whether existing query types indeed skip the universal quantifier. As we have pointed out in the following, the recursive definition may seem to neglect the universal quantifier but fails when the formula becomes more complex.

\subsection{$\tf$ queries deviate from the goal of $\efo$ queries}
\begin{proposition}\label{prop:bad-TF}
    In DNF\footnote{More generally, this conclusion is true in every prenex normal form, which is a pre-requisite for query embedding method~\citep{wang_benchmarking_2021}.}, the universal quantifier may exist in $\tf$ queries, thus the family of $\tf$ query is not a subset of $\efo$.
\end{proposition}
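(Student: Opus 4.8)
The plan is to exhibit a concrete Tree-Form query whose DNF (equivalently, whose prenex normal form) provably requires a universal quantifier, thereby witnessing that $\Phi_\tf \not\subseteq \efo$. The key observation driving the whole argument is clause (ii) of Definition~\ref{def:TF query}: Tree-Form queries are closed under \emph{arbitrary} negation, and negation does not commute with the existential projection introduced in clause (iv). First I would take the simplest nontrivial Tree-Form query built with projection, namely $\phi(y^{\prime}) = \exists y.\, r(y,y^{\prime}) \land s(a,y)$ for some fixed entity $a$ and relations $r,s$, which lies in $\Phi_\tf$ by applying (i) and then (iv). By clause (ii), its negation $\psi(y^{\prime}) = \lnot \phi(y^{\prime})$ is also Tree-Form.

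Next I would push the negation inward to reach prenex/DNF form. The point is that $\lnot \exists y.\, (r(y,y^{\prime}) \land s(a,y))$ is logically equivalent to $\forall y.\, (\lnot r(y,y^{\prime}) \lor \lnot s(a,y))$. So the natural prenex normal form of a legitimately constructed Tree-Form query contains a genuine universal quantifier over $y$. This already suggests the proposition, but the real work is to argue that the universal quantifier is \emph{essential} — that no equivalent prenex formula over these predicates avoids it — since otherwise one could object that some clever rewriting removes the $\forall$. I would establish this by a semantic argument: exhibit two knowledge graphs (models) that agree on the truth of every purely existential prenex sentence/query one could write from the available atoms but disagree on the truth of $\psi(a^{\prime})$ for some candidate answer $a^{\prime}$. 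Concretely, the universally quantified statement ``for all $y$, $y$ is not an $r$-predecessor of $a^{\prime}$ or not an $s$-successor of $a$'' is monotone-decreasing under adding edges, whereas existential queries are monotone-increasing; this monotonicity mismatch is exactly what separates the two classes and makes the $\forall$ irremovable.

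The step I expect to be the main obstacle is precisely this essentiality argument, i.e.\ ruling out \emph{all} existential prenex rewritings rather than just observing that the \emph{canonical} rewriting uses $\forall$. The clean way to handle it is to invoke preservation theorems from model theory: existential (positive-quantifier, possibly with negated atoms) sentences are preserved under the relevant model extensions in a way that a formula equivalent to $\psi$ cannot be, since $\psi$ fails monotonicity. I would therefore construct an explicit pair $\mathcal{G}_1 \subseteq \mathcal{G}_2$ of knowledge graphs where adding the single edge that makes some existential witness appear flips the truth value of $\psi$ in the wrong direction for any existential formula, giving the contradiction. With that separation in hand, the conclusion that the prenex/DNF form of a $\tf$ query may contain a universal quantifier — and hence $\Phi_\tf \not\subseteq \efo$ as $\efo$ admits only existential quantifiers — follows immediately, and the footnote's stronger claim about every prenex normal form is covered by the same monotonicity witness since it is independent of the particular normal form chosen.
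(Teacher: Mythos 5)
Your witness is exactly the one the paper uses: build $\exists x.\,r_1(a,x)\land r_2(x,y)\in\Phi_\tf$ from clauses (i) and (iv) of Definition~\ref{def:TF query}, negate it using clause (ii), and observe that pushing the negation inward to reach DNF/prenex form yields $\forall x.\,\lnot r_1(a,x)\lor\lnot r_2(x,y)$. The paper's proof stops there, because Proposition~\ref{prop:bad-TF} is a \emph{syntactic} claim: the DNF of a legitimately constructed Tree-Form query contains a universal quantifier, and such a formula is not generated by the inductive rules of Definition~\ref{def:formula}, hence not (syntactically) an $\efo$ formula. Up to that point your argument coincides with the paper's.

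The step you flag as ``the real work'' --- proving the universal quantifier is semantically irremovable --- is both unnecessary for the proposition as stated and, as you propose to carry it out, would not go through. First, the monotonicity/preservation argument separates $\psi$ only from \emph{positive} existential queries: Definition~\ref{def:formula}(i) puts negated atoms such as $\lnot r(a,b)$ into $\Phi$, so $\efo$ queries are not monotone-increasing under edge additions, and the ``monotonicity mismatch'' does not distinguish $\psi$ from the full $\efo$ class. Second, over a fixed finite entity set $\entity=\{b_1,\dots,b_n\}$ the formula $\forall x.\,\lnot r_1(a,x)\lor\lnot r_2(x,y)$ unfolds into the quantifier-free conjunction $\bigwedge_{i}\bigl(\lnot r_1(a,b_i)\lor\lnot r_2(b_i,y)\bigr)$, which \emph{is} syntactically an $\efo$ query by clauses (i) and (ii) of Definition~\ref{def:formula}; so strict semantic inexpressibility is simply false in this finite setting, and any essentiality claim would have to be reformulated as a statement uniform over knowledge graphs (where classical preservation theorems also need care on finite structures). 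None of this damages the proposition --- the syntactic derivation you and the paper both give already establishes it --- but the extra semantic layer you planned should be dropped rather than repaired.
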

\noindent{Proof:} We derive this proposition based on the Definition~\ref{def:TF query}:
\begin{align*}
    r_1(a,x) &\in \Phi_\tf \\ 
    \exists x. r_1(a,x)\land r_2(x,y)& \in \Phi_\tf \\
    \lnot \exists x. r_1(a,x)\land r_2(x,y)& \in \Phi_\tf \\
    \forall x. \neg r_1(a, x) \lor \neg r_2(x, y) & \in \Phi_\tf 
\end{align*} 

We note that in Definition~\ref{def:formula}, the negation is defined only on the atomic formulas, while in Definition~\ref{def:TF query}, the negation is defined on the whole formula, which leads to the occurrence of the universal quantifier.

\begin{example}\label{example:pni}
Based on the above discussion, the original ``pni'' query proposed in ~\citet{ren_beta_2020} and shown in Figure~\ref{fig:pni}, should have this kind of formulation in DNF ($a,b\in \entity$): 
\begin{equation}
    \forall x. (r_3(b, y)\land \neg r_2(x, y)) \lor (r_3(b, y)\land \neg r_1(a, x))
\end{equation}
\end{example}

By proposition~\ref{prop:bad-TF} and Example~\ref{example:pni}, we clearly show that both the methodology and dataset in previous research~\citep{ren_beta_2020} deviate from its original ambitious goal to answer $\efo$ query.

\section{Gaps between $\tf$ and $\efo$ query}
In this section, we aim to further identify the gap between the $\efo$ query family and $\tf$ family. To this end, we first introduce the representation method for general $\efo$ query.
\subsection{A graph-theoretical description of $\efo$ queries}\label{sec:Unsolvable}

We discuss $\efo$ queries by considering each conjunctive query $\gamma_j$ as a query graph. 
Graphs representation is already discussed in answering Tree-Form queries without negation~\citep{daza_message_2020,liu_mask_2022} and Tree-Form queries~\citep{wang_logical_2023}.

\begin{definition}[Query Graph]
    Let $\gamma$ be a conjunctive formula in ~\eqref{eq:efo-in-dnf}, its query graph $G(\gamma) = \{(h,r,t,\{\text{T/F}\})\}$, each quadruple corresponds to an atomic formula or its negation, representing an edge with two endpoints $h,t$, and two attributes $r$, \text{T/F}, the relation and whether it is positive.
\end{definition}

We note nodes of different types represent the node to be a constant entity, existential variable, or free variable. Logical conjunction is naturally presented in the query graph because the order of existentially quantified variables is exchangeable, indicating the main strength of the query graph representation. 
Then, all $\efo$ queries in DNF can be represented by a set of query graphs $\{G(\gamma_j)\}_{j=1}^{m}$. We show an example of query graph in Figure~\ref{fig:query graph}.

The concept of query graph is also similar to the Gaifman graph~\citep{vardi_constraint_2000}, which is well investigated in constraint programming. The key differences are (1) query graphs emphasize more node types for each term and (2) query graphs also encode logical negation.

\subsection{Syntatical gap between $\tf$ and $\efo$ queries}\label{sec: gap between tree-form and efo}

Figure~\ref{fig:query-diagram} briefly categorizes $\efo$ queries by whether their query graphs are acyclic or simple. We note previous research shows that Tree-Form queries are assumed to be both acyclic and simple~\citep{hamilton_embedding_2018}. Then, the key to understanding the relationship between $\tf$ queries and $\efo$ queries is to determine the gap between $\tf$ queries without a universal quantifier (the bottom half orange box) and $\efo$ queries with an acyclic simple query graph (the top-right of the dark box). We characterize this gap by the following two properties.





\begin{figure*}[t]
    \vspace{-3em}
    \centering
    \includegraphics[width=\linewidth]{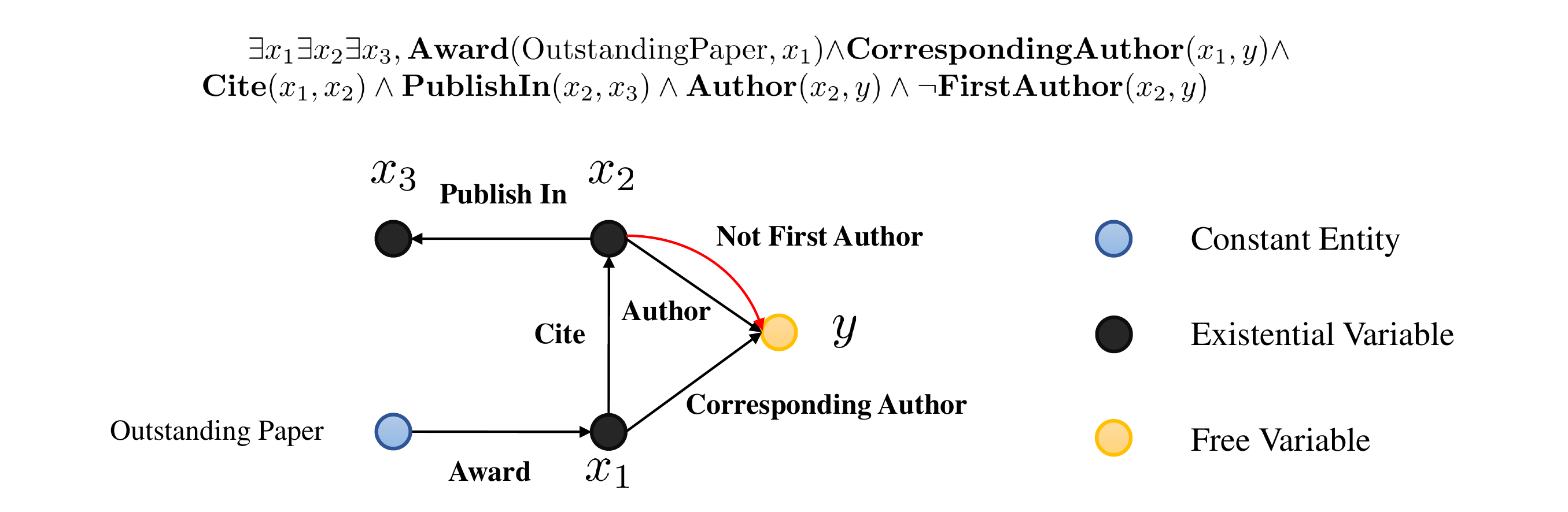}
    \caption{A given query ``Find someone who has such two papers: s/he is the corresponding author of the first paper which has been awarded Outstanding Paper, s/he is the author but not the first author of the second paper which has been published and been cited by the first paper.'' can be represented as such query graphs. The formal existential formula is also given at the top. }\label{fig:query graph}
\end{figure*}

\begin{property}[Negation without Constants]\label{ppt:neg}
There is a negated atomic formula of the form $\lnot r(x_1,x_2)$ or $\lnot r(x_1,y)$ where $x_{1},x_{2}$ are existential variables and $y$ is the free variable. 
\end{property}

\begin{property}[Existential Leaves]\label{ppt:leaf}
For any spanning tree by using the free variable as the root, the leaf node of the query graph can be an existential variable rather than a constant entity.
\end{property}





\begin{assumption}[Reverse relation enrichment]\label{ass: reverse relation}
    For any relation $r$, there exists a reversed $r^{\prime}$ such that $r^{\prime}(a,b) = r(b,a)$ for any pair of entities $a,b$.
\end{assumption}

\begin{lemma}\label{lem: subformula sentence}
    A query that has a sentence as its subformula is trivial and should not be considered. 
\end{lemma}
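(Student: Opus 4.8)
The plan is to read ``has a sentence as a subformula'' through the query-graph lens and then exploit the fact that a sentence carries a single, $y$-independent truth value. First I would fix a single conjunctive clause $\gamma(y) = \exists x_1,\dots,x_k.\ \alpha_1\land\cdots\land\alpha_n$ from the DNF in~\eqref{eq:efo-in-dnf}; the disjunctive case then follows clause by clause, since a disjunct whose sentence part is false drops out of the union while a disjunct whose sentence part is true simplifies. In the query graph $G(\gamma)$, a subformula that is a sentence contains no free variable, so every node it touches is a constant entity or an existential variable; hence it corresponds to a union of connected components of $G(\gamma)$ that avoid the free-variable node $y$.

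Next I would establish the semantic factorization $\gamma(y)\equiv \psi(y)\land\sigma$. The key fact, already noted in the query-graph discussion, is that existentially quantified variables in different connected components never co-occur in an atom and their order is exchangeable; therefore the block of existential quantifiers splits and can be pushed inward, letting me write the clause as the conjunction of $\sigma$, the sentence obtained by existentially closing the $y$-free components, and $\psi(y)$, the remaining part that contains $y$. After renaming bound variables so that the scopes are disjoint this is a routine prenex/scope manipulation, so I would not belabor it.

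With the factorization in hand the argument is a two-line case analysis on the truth value of $\sigma$, which is a Boolean constant independent of the candidate answer $a$. Substituting $a$ for $y$ gives $\gamma(a)\equiv \psi(a)\land\sigma$, so if $\sigma$ is true then $\mathcal{A}[\gamma(y)]=\mathcal{A}[\psi(y)]$ and the sentence is redundant, whereas if $\sigma$ is false then $\mathcal{A}[\gamma(y)]=\emptyset$ and the clause is vacuous. In either case the sentence imposes no constraint that links to the free variable: the query either collapses to a strictly smaller query or returns no answers, which is exactly the sense in which it is ``trivial'' and should not be studied as a genuine complex query.

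The step I expect to be the main obstacle is making the passage from the syntactic phrase ``sentence as subformula'' to the clean conjunctive factorization $\gamma\equiv\psi\land\sigma$ fully rigorous, since a subformula is a syntactic constituent and one must verify that any such sentence can always be isolated as a conjunct whose variable scope is disjoint from the $y$-containing part; this is where the DNF assumption and bound-variable renaming do the real work. A secondary subtlety is the Open World Assumption: the truth value of $\sigma$ is taken with respect to the (predicted) complete graph $\mathcal{G}$ rather than the observed $\mathcal{G}_o$, but since it remains a single value independent of $a$ --- indeed, even under a fuzzy truth-value semantics it would uniformly cap every candidate's score and preserve the induced ranking --- the dichotomy above is unaffected.
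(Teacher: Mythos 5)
Your argument is correct, but it takes a genuinely different route from the paper's. The paper proves the lemma by a short structural induction on how the sentence $\chi$ can be embedded as a subformula: it fixes $\chi$ with truth value (say) $1$ and checks each connective case --- $\psi\land\chi$ reduces to $\psi$, $\psi\lor\chi$ is identically true, $\lnot\chi$ is identically false, and $\exists x\,\chi$, $\forall x\,\chi$ remain identically true --- concluding by the recursive definition of formulas. You instead work semantically inside a single DNF clause: you observe that a sentence subformula corresponds to connected components of the query graph avoiding $y$, factor the clause as $\gamma(y)\equiv\psi(y)\land\sigma$, and split on the truth value of the Boolean constant $\sigma$. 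Your version buys a sharper structural picture (the triviality lives exactly in the $y$-free components, and the two outcomes are precisely ``empty answer set'' or ``strictly smaller equivalent query''), plus the observation that the dichotomy survives fuzzy semantics, which the paper does not make. What it costs is generality and some bookkeeping: the prenex/scope-disjointness manipulation you flag must indeed be carried out, and your factorization does not cover a \emph{negated} sentence subformula or a universally quantified one. Those cases cannot arise under the $\efo$ grammar of Definition~\ref{def:formula} (negation there is atomic only), but they do arise for Tree-Form queries via Definition~\ref{def:TF query}(ii), which is exactly why the paper's case analysis includes $\lnot\chi$ and $\forall x\,\chi$. If the lemma is only ever invoked for DNF $\efo$ clauses, your proof suffices; to match the paper's full scope you would need to add those two one-line cases.
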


We note Assumption~\ref{ass: reverse relation} is a common practice in previous research~\citep{lacroix_canonical_2018}, as well as the Lemma~\ref{lem: subformula sentence}~\citep{ren_query2box_2020,ren_beta_2020}.

\begin{theorem}\label{thm:reduce to TF}
    With assumption~\ref{ass: reverse relation} and lemma~\ref{lem: subformula sentence}, any $\efo$ query with a simple acyclic query graph that does not have Property~\ref{ppt:neg} and~\ref{ppt:leaf} is a $\tf$ query.
\end{theorem}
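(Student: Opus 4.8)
The plan is to work entirely in the query graph representation and first peel off the disjunction. In DNF the query is $\gamma_1 \lor \cdots \lor \gamma_m$, and since rule (iii) of Definition~\ref{def:TF query} closes $\Phi_\tf$ under disjunction, it suffices to prove that each conjunctive query $\gamma_j$ whose query graph is simple and acyclic lies in $\Phi_\tf$. So I fix one conjunctive query $\gamma$ with a simple acyclic query graph $G(\gamma)$ and argue by structural induction on $G(\gamma)$ after rooting it at the free variable $y$.

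First I would pin down the shape of $G(\gamma)$ from the hypotheses. Connectedness is free from Lemma~\ref{lem: subformula sentence}: any connected component not containing $y$ corresponds to a subformula in which every variable is bound, i.e. a sentence, and is therefore trivial; hence $G(\gamma)$ is connected, and being simple and acyclic it is a tree, which I root at $y$. The same Lemma forces every constant entity to be a leaf: if a constant $c$ had a descendant, deleting the edge above $c$ would split off a subtree whose atoms involve only $c$ and existentially quantified variables, again a sentence subformula. Combined with the absence of Property~\ref{ppt:leaf}, which says the leaves cannot be existential variables, this yields the clean dichotomy that the leaves are exactly the constants while every internal node (the root $y$ included) is a variable. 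Finally, the absence of Property~\ref{ppt:neg} says every negated atom has at least one constant endpoint; since constants are leaves, every negated edge is a pendant edge joining a constant leaf to its variable parent.

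With this structure the induction is to show, for every internal node $v$, that the formula $\phi_v(v)$ obtained by conjoining all atoms of the subtree $T_v$ rooted at $v$ — treating $v$ as the free variable and existentially quantifying the descendants — belongs to $\Phi_\tf$; the theorem is the case $v=y$. For a node $v$ with children $c_1,\dots,c_m$ I build $\phi_v$ as the rule-(iii) conjunction of the contributions $\beta_i(v)$ of each child. When $c_i$ is a constant leaf joined by a (possibly reversed, via Assumption~\ref{ass: reverse relation}) atom $r_i(c_i,v)$, rule (i) gives $r_i(c_i,v)\in\Phi_\tf$, and if the edge is negated rule (ii) then gives $\lnot r_i(c_i,v)\in\Phi_\tf$. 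When $c_i$ is an internal variable node the edge must be positive — a negated edge between the two variables $v$ and $c_i$ would violate the absence of Property~\ref{ppt:neg} — so orienting it as $r_i(c_i,v)$ and applying rule (iv) to the inductive hypothesis $\phi_{c_i}(c_i)\in\Phi_\tf$ yields $\exists c_i.\, r_i(c_i,v)\land \phi_{c_i}(c_i)\in\Phi_\tf$. Conjoining the $\beta_i$ recovers exactly the atoms of $T_v$, closing the induction.

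The delicate points, rather than the mechanical induction, are where the two excluded properties do their work, and I would make the write-up isolate them: the absence of Property~\ref{ppt:leaf} is precisely what guarantees the base case bottoms out at a constant, the only situation in which rule (i) can fire; and the absence of Property~\ref{ppt:neg} together with Lemma~\ref{lem: subformula sentence} is precisely what confines every negation to a pendant atom so that rule (ii) can absorb it locally, never letting the negation scope over an existential quantifier — which is exactly the phenomenon that manufactured the spurious universal quantifier in Proposition~\ref{prop:bad-TF}. The main obstacle I anticipate is the careful bookkeeping needed to justify that deleting an edge genuinely decomposes $\gamma$ into the claimed sentence subformula so that Lemma~\ref{lem: subformula sentence} applies; this step leans essentially on acyclicity and simplicity of $G(\gamma)$ to ensure the descendants of a node occur in no atom outside its subtree.
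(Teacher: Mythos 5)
Your proposal is correct and follows essentially the same route as the paper's proof: root the (necessarily connected, by Lemma~\ref{lem: subformula sentence}) acyclic simple query graph at the free variable, use Assumption~\ref{ass: reverse relation} to orient edges toward the parent, observe that constants must be leaves and that the absence of Properties~\ref{ppt:leaf} and~\ref{ppt:neg} forces leaves to be constants and variable--variable edges to be positive, and then induct over the tree using rules (i)--(iv) of Definition~\ref{def:TF query}. Your bottom-up structural induction on subtrees is just a repackaging of the paper's induction on the number of nodes, and if anything you are slightly more explicit than the paper about connectedness and about absorbing negated pendant edges via rule (ii).
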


 In this way, we fill the syntactic gap by making it clear what kinds of queries are left to be discussed, in order to achieve the ambitious goal of answering $\efo$ queries. The proof of Lemma~\ref{lem: subformula sentence} and Theorem~\ref{thm:reduce to TF} is detailed in Appendix~\ref{app:proof of reduce to TF}.

\subsection{Complexity gap between $\tf$ and $\efo$ queries}\label{sec:challenge}

In this part, we further rethink the complexity of the currently investigated queries, particularly, the traditional claim that ``reasoning involves an exponential growth in computational time"~\citep{ren_beta_2020,chen_fuzzy_2022}.

When discussing the complexity, we assume the knowledge graph $\mathcal{G}$ is complete for simplicity. For general $\efo$ queries, it has long been known to be \textbf{NP-complete}~\citep{chandra_optimal_1977}.

However, $\tf$ queries discussed in previous literature are particularly simple. Queries in $\tf \cap \efo$, are well-known special cases with structure-based \textbf{tractability} in inference~\citep{vardi_constraint_2000}. Specifically, we adapt the complexity results from~\citet{dechter_network-based_1987} on knowledge graphs.
\begin{proposition}[Adapted from~\citet{dechter_network-based_1987}]\label{thm:tf}
    The optimal complexity of answering $\tf$ queries in previous datasets~\citep{ren_beta_2020,wang_benchmarking_2021} is $O(n k^2)$, where $n$ is the number of terms and $k$ is a coefficient to characterize the sparsity of the knowledge graph
    \begin{align}
        k = \max_{r\in \relation} |\{ a\in \entity| \exists b. (a,r,b)\in \mathcal{G} \text{ or } (b, r, a) \in \mathcal{G}\}| 
    \end{align}
\end{proposition}
The more detailed proof is provided in Appendix~\ref{app:tree form complexity} where we show we extend traditional results that can only be applied to $\tf \cap \efo$.
 We found that the complexity grows linearly with the number of variables in the query. The complexity depends on the property of knowledge graphs (see the coefficient $k$) and is at most quadratic. Our theorem provides a strict lower bound for previous optimization methods like QTO~\citep{bai_answering_2023}.

We note that the entire $\efo$ queries are inherently harder than $\tf$ queries. To the best of our knowledge, none of the existing methods have considered $\efo$-$\tf$ queries, and their models lack the ability to represent those queries precisely in syntax.

\begin{figure}[t]\label{fig:FIT toy example}
    \vspace{-2em}
    \centering
    \includegraphics[width=\linewidth]{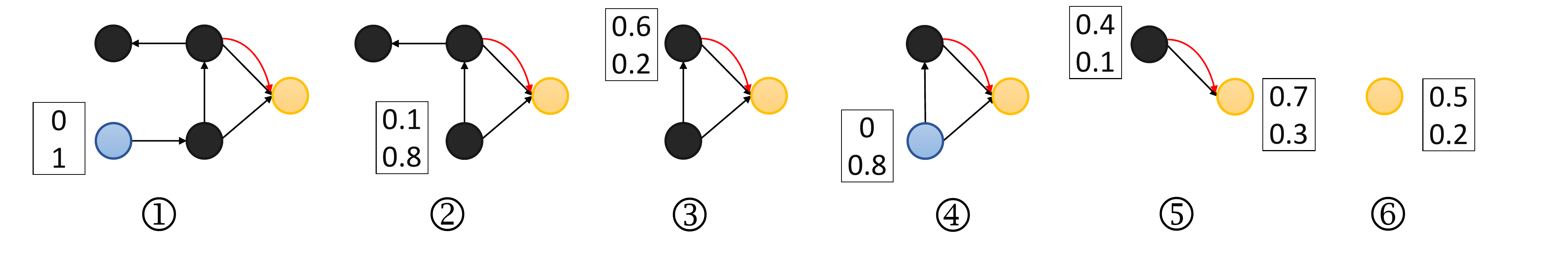}
    \caption{A toy example to show the process of FIT, the vector indicates the fuzzy set of the corresponding node has been updated in this step. The query graph follows Figure~\ref{fig:query graph} with the grounded entity and relation omitted.}
\end{figure}

\section{Methodology}\label{sec:methodology}

This section presents Fuzzy Inference with Truth values (FIT) algorithm for answering $\efo$ queries on knowledge graphs. FIT is a neural-symbolic method that fine-tunes neural link predictors to address the open-world assumption. Specifically, FIT accepts a wide range of models as long as they provide a ``truth value'' for a given triple~\citep{trouillon_complex_2016,cohen_tensorlog_2016,zhu_neural_2021}. Moreover, we prove that our algorithm ensures faithfulness and perfectness with assumptions.

\subsection{Fuzzy truth value definition}

For simplicity, we follow ~\citet{cohen_tensorlog_2016} to conceptualize the neural link predictor as $|\relation|$ matrices $P_r\in [0, 1]^{|\entity|\times|\entity|}$, where $P_r(a, b)$ is the truth value of triple $(a, r, b)$. We note that $r, a,b$ can be treated as integers by indexing all relations and entities. 

It is straightforward to use fuzzy logic theory, and define the truth value function $T$ as the following:
\begin{definition}[Truth value of existential formulas]\label{def:tv-formula}
    Let $\phi$ and $\psi$ be existential formulas, $\top$ and $\bot$ are $t$-norms and $t$-conorms, $\bot^{\star}$ is another $t$-conorm, and $r\in \relation$, $a,b\in \entity$.
    \begin{compactenum}
    \item[(i)] $T(r(a, b))=P_r(a, b)$
    \item[(ii)] $T(\lnot \phi) = 1 - T(\phi)$
    \item[(iii)] $T(\phi \land \psi) = T(\phi) \top T(\psi)$, $T(\phi \lor \psi) = T(\phi) \bot T(\psi)$
    \item[(iv)] $T(\exists x \phi(x))=\bot^{\star}_{a\in \entity} T(\phi(a))$
    \end{compactenum}
\end{definition}
For the introduction of the $t$-norm and $t$-conorm, please refer to Appendix~\ref{app:t-norm introduction}.
This definition follows the inductive definition of the existential formula in Definition~\ref{def:formula}, so if the formula is a sentence, it has a certain truth value. 
Moreover, we note that $\bot^*$ is commonly chosen to be Godel $t$-norm~\citep{klir_fuzzy_1995,hajek_metamathematics_2013} but also extended to others in recent research~\citep{van_krieken_analyzing_2022}. 

    



Then, to answer a $\efo$ query is to estimate the truth values of all possible answers as the corresponding substitutions. The more plausible answers are expected to have a higher truth value. We store the truth values in the answer vector as our goal.
\begin{definition}[Answer Vector]
    For $\efo$ query, $\phi(y)$, its answer vector $A[\phi(y)]\in [0, 1]^{|\entity|}$ is given by $A[\phi(y)](a) = T(\phi(a/y))$.
\end{definition}

\subsection{Differentiable neural matrix construction}

Neural link predictor is a differentiable model that provides a scoring function $s$ for each possible triple (a,r,b). To fit into our definition and construct the neural matrices, we calibrate the real number score~$s(a,r,b)$ to probability within $[0,1]$ interval by using the softmax function:
\begin{equation*}
    P_{r,a}^{\star}(b) = \frac{exp(s(a,r,b))}{\Sigma_{c\in \entity} exp(s(a,r,c))}
\end{equation*}

As the softmax outputs a vector that has the sum of 1, we further compute the scaling:
\begin{equation}
    Q_{a,b}= 
\begin{cases}
    \frac{|\{d|(a,r,d)\in\mathcal{G}_o\}|}{\Sigma_{c\in \{d|(a,r,d)\in\mathcal{G}_o\}} P_{r,a}^{\star}(c)},& \text{if } |\{d|(a,r,d)\in\mathcal{G}_o\}|> 0\\
    1,              & \text{if } |\{d|(a,r,d)\in\mathcal{G}_o\}|= 0
\end{cases}
\end{equation}

Therefore, the $a$-th row of $r$-th matrix is got by clamping the value for each element:
\begin{equation}
    P_{r}(a,b)= min(1,P_{r,a}^{\star}(b) \times Q_{a,b})
\end{equation}

In testing, the construction of the neural matrix is a bit complicated, details in Appendix~\ref{app:matrix}.

\subsection{Forward inference with truth value}

We explain how to compute the answer vector given neural matrices by a toy example. Full detail of the rigorous derivation is in Appendix~\ref{app:methodology details} and the analysis of its complexity is in Appendix~\ref{app:complexity of FIT}. We infer a fuzzy vector $C_{u}\in[0,1]^{|\entity|}$ to represent the fuzzy set for every node $u$ in the query graph, with this help, we remove node and edge step by step and updating $C_u$ for corresponding nodes simultaneously, keeping the answer vector the same on a smaller graph. Our techniques extend traditional symbolic search methods, like~\citet{yannakakis_algorithms_1981} in database theory and~\citet{dechter_network-based_1987} in constrain satisfaction problem, see more discussion in Appendix~\ref{app: connection to traditional}.

To illustrate our FIT briefly, we explain each step shown in Figure~\ref{fig:FIT toy example}. 1. We initialize the constant entity by giving it a one-hot vector as its fuzzy set. 2. We remove the constant entity and update the fuzzy set of the nodes that are connected to the constant. 3. We remove a leaf variable and update the fuzzy vector of its neighbor. 4. Since it is a circle in the graph, we enumerate possible candidates in the fuzzy set of a variable and make it a constant. 5. Similar to step 2, we remove the constant. 6. The query graph contains only the free variable and the final answer is naturally its fuzzy set. We note this example has included all three features that should be considered according to our Theorem~\ref{thm:reduce to TF}: existential leaf, multigraph, and cyclic graph.

Finally, as all the computations in the FIT are differentiable, the loss is defined as cross-entropy of the predicted answer vector~$A$ with the real answer~$\mathcal{A}$:
\begin{equation}
    \mathcal{L} = H(A,\mathcal{A}) = -[\Sigma_{a\in \mathcal{A}} \ln(A(a)) + \Sigma_{(a\in \entity - \mathcal{A})} \ln(1-A(a))]
\end{equation}

In this way, backward propagation helps to fine-tune the neural link predictors using data of the complex query, rather than just one-hop queries.

\begin{figure}[t]
    \centering
    \includegraphics[width=0.8\linewidth]{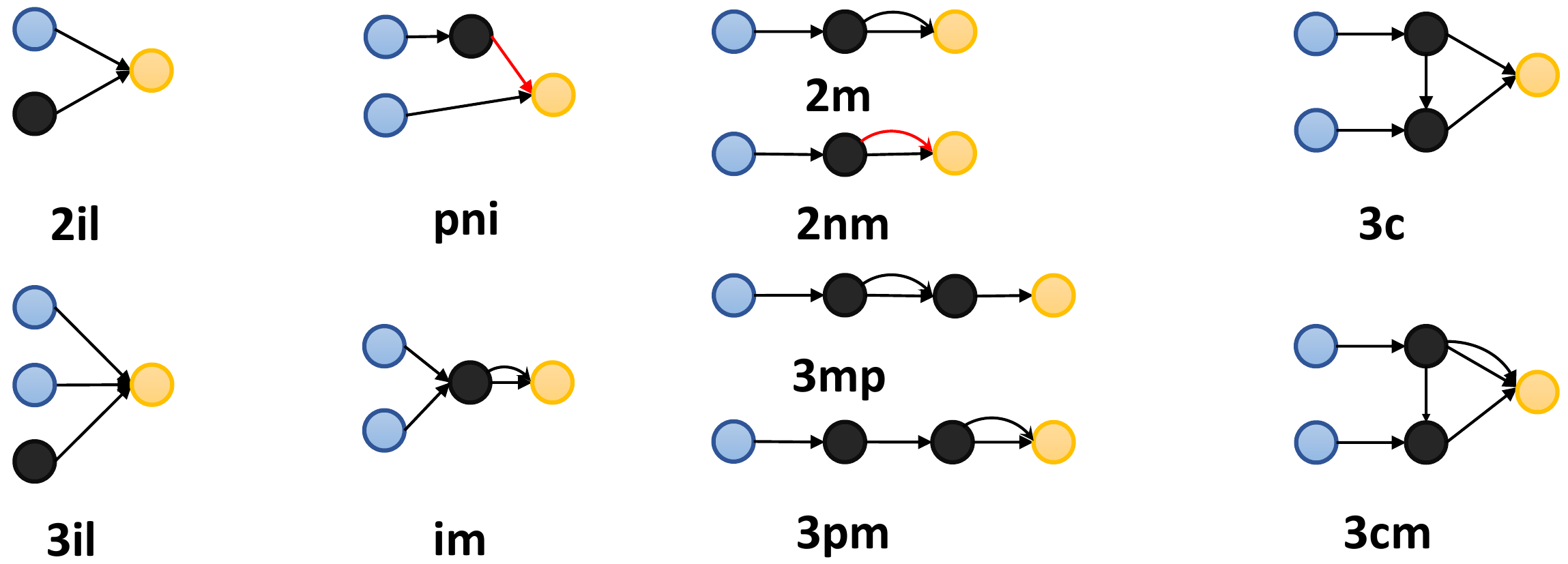}
    \caption{Query graphs of each real $\efo$ formula. Naming convention: l ``existential leaf'', m  ``multi graph'', c for ``circle''. We also follow the previous convention: ``i'' for intersection, ``p'' for projection, and ``n'' for negation. The representation of query graphs follows Figure~\ref{fig:query graph}.}
    \label{fig:new dataset}
    \vspace{-1.5em}
\end{figure}


\subsection{Theoretical guarantees}

\begin{definition}[Perfect Matrices]\label{def: perfect matrix}
    Given a knowledge graph $\mathcal{G}$, we say the matrices $\{P_r\}_{r\in \relation}$ are perfect if $P_r(h,t) = 1 \iff (h,r,t)\in \mathcal{G}$, $P_r(h,t) = 0 \iff (h,r,t) \notin \mathcal{G}$.
\end{definition}

\begin{theorem}[Perfectness]\label{thm:perfect}
 If the matrices$\{P_r\}_{r\in \relation}$ are perfect, then the FIT algorithm is perfect, meaning: $A[\phi(y)](a)=1 \iff a\in \mathcal{A}[\phi(y)]$, $A[\phi(y)](a)=0 \iff a\notin \mathcal{A}[\phi(y)]$.
\end{theorem}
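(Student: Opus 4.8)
The plan is to prove perfectness by structural induction that mirrors the inductive definition of the truth value function $T$ in Definition~\ref{def:tv-formula}, combined with the step-by-step graph elimination procedure that FIT performs. The key observation is that when the matrices $\{P_r\}_{r\in\relation}$ are perfect, every entry is either $0$ or $1$, so the fuzzy truth values collapse to Boolean values. Under this collapse, I would argue that the chosen $t$-norm $\top$, $t$-conorm $\bot$, and existential $t$-conorm $\bot^{\star}$ all restrict to the correct Boolean operations on $\{0,1\}$: any continuous $t$-norm satisfies $x\top 1 = x$ and $x \top 0 = 0$ so it agrees with Boolean $\wedge$ on $\{0,1\}$, dually $\bot$ agrees with $\vee$, and $\bot^{\star}_{a\in\entity}$ agrees with the Boolean existential (it outputs $1$ iff some disjunct is $1$). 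This reduces the whole fuzzy computation to ordinary two-valued logic.

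First I would establish a base case and an inductive step at the level of the \emph{truth value function} $T$ itself, independent of the elimination algorithm. The base case is Definition~\ref{def:tv-formula}(i): by the perfectness of the matrices, $T(r(a,b)) = P_r(a,b) = 1 \iff (a,r,b)\in\mathcal{G}$, which is exactly the semantic truth of the atom. For the inductive step I would treat negation, conjunction/disjunction, and the existential quantifier in turn, using the Boolean collapse established above to show $T(\phi(a/y))\in\{0,1\}$ and that it equals $1$ precisely when $\phi(a/y)$ is true in $\mathcal{G}$. This gives $A[\phi(y)](a) = T(\phi(a/y)) = 1 \iff a\in\mathcal{A}[\phi(y)]$ and $=0$ otherwise, which is the statement of the theorem \emph{for the idealized truth value semantics of Definition~\ref{def:tv-formula}}.

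The real obstacle, and the step I would spend the most care on, is showing that the FIT \emph{algorithm} actually computes this idealized truth value — i.e. that the node/edge elimination procedure sketched around Figure~\ref{fig:FIT toy example} preserves the answer vector at every step. Here I would prove a loop invariant: at each stage of the elimination, for the reduced query graph $G'$ with its current fuzzy vectors $\{C_u\}$, the value $C_u(a)$ equals the truth value of the subformula obtained by substituting $a$ into node $u$ and existentially closing the eliminated variables. I would verify this invariant separately for each elimination rule — constant initialization to a one-hot vector (step 1), constant removal with neighbor update via $\top$ (step 2), leaf-variable elimination via $\bot^{\star}$ (step 3), and the cycle-breaking enumeration that grounds a variable to each candidate and takes the disjunction over candidates via $\bot^{\star}$ (step 4). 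Each rule corresponds to applying one clause of Definition~\ref{def:tv-formula}, so under the Boolean collapse the invariant is preserved exactly. The cycle-breaking enumeration is the delicate case, because one must check that enumerating all groundings of a variable and disjoining the results faithfully reproduces the semantics of the existential quantifier on a cyclic query graph; I would argue this using the exchangeability of existential variables noted after the Query Graph definition, so that eliminating in any order yields the same closed formula. Combining the algorithm-correctness invariant with the $T$-level induction yields perfectness for the value actually returned by FIT.

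One remaining technical point I would flag is the treatment of the $0$-side of the biconditionals: it must be checked that perfect matrices guarantee $P_r(h,t)=0$ whenever $(h,r,t)\notin\mathcal{G}$ (which Definition~\ref{def: perfect matrix} supplies), and that the Boolean operations never spuriously produce a value strictly between $0$ and $1$ during elimination — this is exactly what the Boolean-closure property of $\top,\bot,\bot^{\star}$ on $\{0,1\}$ ensures, so no intermediate fuzziness leaks in. With the matrices perfect and every operation closed on $\{0,1\}$, the returned answer vector is guaranteed to be exactly the indicator of $\mathcal{A}[\phi(y)]$, completing the proof.
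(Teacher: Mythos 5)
Your proposal is correct and follows essentially the same route as the paper: the paper's proof is a two-line observation (citing prior work on $t$-quantifiers) that the fuzzy truth value of Definition~\ref{def:tv-formula} coincides with classical two-valued semantics once all atomic truth values lie in $\{0,1\}$, which is exactly your ``Boolean collapse'' argument spelled out via induction on formula structure. The additional loop-invariant verification that the elimination procedure actually computes $T$ is not part of the paper's perfectness proof --- it is established separately in the step-by-step derivations of Appendix~\ref{app:methodology details} --- so your write-up is a more self-contained version of the same argument rather than a different one.
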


Though there is no perfect model in practice, we can always assume the given model is ``good'', which admits the following consistency assumption.
\begin{assumption}[Consistent matrix]\label{ass:consistent}
    For any observed triple $(a, r, b) \in \mathcal{G}_o$, $P_r(a, b) = 1$, for any unobserved triple $(a, r, b) \notin \mathcal{G}_o$, $P_r(a, b) < 1$.
\end{assumption}

\begin{definition}
    We say a model is \textbf{faithful} if it is able to retrieve \textbf{deductible} answers like a traditional logical inference system like database query~\citep{kroenke_database_2002,sun_faithful_2020}. 
\end{definition}

\begin{theorem}[Faithfulness]\label{thm:faithful}
    With assumption~\ref{ass:consistent}, for any $\efo$ query $\phi(y)$ without negation, FIT reaches perfect faithfulness, meaning that every answer $a$ that can be deduced in the observed knowledge graph $\mathcal{G}_{o}$, $A[\phi(y)](a)=1$. 
\end{theorem}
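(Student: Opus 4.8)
\textbf{Proof proposal for Theorem~\ref{thm:faithful} (Faithfulness).}

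The plan is to argue by structural induction on the formula $\phi(y)$, following the inductive definition of existential formulas in Definition~\ref{def:formula}, but specialized to the negation-free case. The key observation is that faithfulness only requires the forward direction: if $a$ is deducible from $\mathcal{G}_o$, then $A[\phi(y)](a)=1$. Since we never need the converse, Assumption~\ref{ass:consistent} in its weak form ($P_r(a,b)=1$ on observed triples, $P_r(a,b)<1$ otherwise) is exactly enough: deducibility in $\mathcal{G}_o$ means the witnesses to the existential statement can be chosen among entities joined by \emph{observed} triples, so every atomic truth value along a deduction witness equals $1$. First I would reduce to a single conjunctive disjunct $\gamma_j$, since $A[\phi(y)]$ is computed as the $t$-conorm over the disjuncts and $a$ being deducible for $\phi$ means it is deducible for some $\gamma_j$; a $t$-conorm returns $1$ whenever any argument is $1$, so it suffices to show $A[\gamma_j(y)](a)=1$.

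Next I would set up the induction on the atomic building blocks of a conjunctive query. For the base case, an atom $r(a,b)$ that is deducible means $(a,r,b)\in\mathcal{G}_o$, so $T(r(a,b))=P_r(a,b)=1$ by Assumption~\ref{ass:consistent}. For the inductive step on conjunction, note every standard $t$-norm $\top$ satisfies $1\top 1 = 1$, so a conjunction of atoms all evaluating to $1$ evaluates to $1$. The genuinely load-bearing step is the existential quantifier: deducibility of $\gamma_j(a)$ furnishes a concrete assignment of entities $c_1,\dots,c_k$ to the existential variables $x_1,\dots,x_k$ such that every atom becomes a true \emph{observed} triple. Then in clause (iv) of Definition~\ref{def:tv-formula}, $T(\exists x\,\psi(x)) = \bot^{\star}_{c\in\entity} T(\psi(c))$, and since the particular witness $c_i$ makes the inner formula evaluate to $1$, and $\bot^{\star}$ (being a $t$-conorm) is bounded below by any single argument, the whole existential evaluates to $1$. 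Iterating over all $k$ quantifiers gives $T(\gamma_j(a))=1$, hence $A[\gamma_j(y)](a)=1$.

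The main obstacle is bridging the semantic notion of \emph{deducible answer} with the \emph{algorithmic} quantity $A[\phi(y)](a)$ actually produced by FIT. Definition~\ref{def:tv-formula} specifies the truth value function $T$ directly on the formula, whereas FIT computes the answer vector by the node-and-edge elimination procedure on the query graph. I would therefore need to invoke the correctness of the forward inference, namely that FIT's elimination steps preserve the answer vector (as asserted in the methodology section and proved in Appendix~\ref{app:methodology details}), so that the value FIT outputs coincides with $T(\phi(a/y))$ under the chosen $t$-norm operators. Granting that equivalence, the structural induction above closes the argument; the delicate point is ensuring that the monotonicity properties used ($1\top 1=1$ and $x\le x\bot^{\star} y$) hold for \emph{every} admissible choice of $t$-norm and $t$-conorm, including the product and Łukasiewicz families, which I would verify holds because all $t$-norms share the identity $1$ and all $t$-conorms share the annihilator $1$ together with monotonicity.
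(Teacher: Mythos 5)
Your proof is correct, and it rests on the same two ingredients as the paper's — observed atoms evaluate to $1$ under Assumption~\ref{ass:consistent}, and negation-free formulas compose only monotone operations — but it organizes the argument differently. You run a direct structural induction: reduce to a single conjunctive disjunct, extract a witness assignment for the existential variables from the deduction over $\mathcal{G}_o$, and push the value $1$ upward using $1\top 1=1$ and $x \le x\,\bot^{\star}\,y$. The paper instead introduces a second truth-value function $T'$ induced by the perfect matrices of $\mathcal{G}_o$ (Definition~\ref{def: perfect matrix}), invokes Theorem~\ref{thm:perfect} to conclude $T'(\phi(a))=1$ for every deductible $a$, and then notes $T(r(b,c))\ge T'(r(b,c))$ on every atom, so monotonicity of the $t$-norm and $t$-conorms yields $T(\phi(a))\ge T'(\phi(a))=1$ in one stroke. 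The paper's route is shorter because it reuses Theorem~\ref{thm:perfect} as a black box and never names witnesses; yours is self-contained and makes explicit exactly where negation-freeness enters (only positive atoms occur, so only monotone connectives are composed — the same place the paper needs it, since $T(\lnot\phi)=1-T(\phi)$ would reverse the inequality $T\ge T'$). Your closing caveat about identifying the algorithmic output $A[\phi(y)](a)$ with the semantic value $T(\phi(a/y))$ is a gap shared with the paper's own proof, which likewise delegates that identification to the derivations in Appendix~\ref{app:methodology details}.
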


The proof of the Theorem~\ref{thm:perfect} and Theorem~\ref{thm:faithful} is provided in Appendix~\ref{app: theoretical guarantee proof}.

\begin{table}[t]
\vspace{-4em}
\centering
\scriptsize
\caption{MRR results(\%) of the new queries on the real $\efo$ dataset.}
\label{tab:new efo1}
\begin{tabular}{cllllllllllll}
\toprule
\multicolumn{1}{l}{Knowledge   Graph} & Method & pni & 2il & 3il & 2m & 2nm & 3mp & 3pm & im & 3c & 3cm & AVG. \\
\midrule
\multirow{7}{*}{FB15k-237} & BetaE & 9.0 & 25.0 & 40.1 & 8.6 & 6.7 & 8.6 & 6.8 & 12.3 & 25.2 & 22.9 & 16.5 \\
 & LogicE & 9.5 & 27.1 & 42.0 & 8.6 & 6.7 & 9.4 & 6.1 & 12.8 & 25.4 & 23.3 & 17.1 \\
 & ConE & 10.8 & 27.6 & 43.9 & 9.6 & 7.0 & 9.3 & 7.3 & 14.0 & 28.2 & 24.9 & 18.3 \\
 & QTO & 12.1 & 28.9 & 47.9 & 8.5 & 10.7 & 11.4 & 6.5 & 17.9 & 38.3 & 35.4 & 21.8 \\
 & CQD & 7.7 & 29.6 & 46.1 & 6.0 & 1.7 & 6.8 & 3.3 & 12.3 & 25.9 & 23.8 & 16.3 \\
 & LMPNN & 10.7 & 28.7 & 42.1 & 9.4 & 4.2 & 9.8 & 7.2 & 15.4 & 25.3 & 22.2 & 17.5 \\
 & FIT & \textbf{14.9} & \textbf{34.2} & \textbf{51.4} & \textbf{9.9} & \textbf{12.7} & \textbf{11.9} & \textbf{7.7} & \textbf{19.6} & \textbf{39.4} & \textbf{37.3} & \textbf{23.9} \\
  \midrule
\multirow{7}{*}{FB15k} & BetaE & 29.9 & 34.8 & 50.6 & 24.4 & 9.6 & 19.0 & 18.4 & 29.1 & 30.5 & 30.7 & 27.7 \\
 & LogicE & 30.7 & 39.3 & 53.0 & 24.1 & 10.5 & 20.5 & 15.5 & 30.7 & 31.8 & 31.7 & 28.8 \\
 & ConE & 37.0 & 40.1 & 57.3 & 33.3 & 11.5 & 23.9 & 27.6 & 38.7 & 35.0 & 36.3 & 34.1 \\
 & QTO & 48.2 & 49.5 & 68.2 & 64.6 & 19.4 & 48.5 & 53.7 & 73.9 & 53.3 & 54.9 & 53.4 \\
 & CQD & 24.2 & 47.6 & 65.4 & 23.2 & 1.6 & 11.0 & 8.7 & 36.3 & 31.3 & 32.9 & 28.2 \\
 & LMPNN & 38.7 & 43.2 & 57.8 & 40.3 & 7.9 & 24.0 & 30.5 & 48.4 & 32.2 & 30.9 & 35.4 \\
 & FIT & \textbf{57.9} & \textbf{70.4} & \textbf{77.6} & \textbf{73.5} & \textbf{39.1} & \textbf{57.3} & \textbf{64.0} & \textbf{79.4} & \textbf{63.8} & \textbf{65.4} & \textbf{64.8} \\
  \midrule
\multirow{7}{*}{NELL} & BetaE & 7.5 & 43.3 & 64.6 & 29.0 & 5.3 & 8.7 & 14.4 & 29.5 & 36.1 & 33.7 & 27.2 \\
 & LogicE & 9.8 & 47.0 & 66.6 & 34.7 & 6.4 & 13.3 & 17.8 & 35.1 & 38.9 & 37.9 & 30.8 \\
 & ConE & 10.3 & 42.1 & 65.8 & 32.4 & 7.0 & 12.6 & 16.8 & 34.4 & 40.2 & 38.2 & 30.0 \\
 & QTO & 12.3 & 48.5 & 68.2 & 38.8 & 12.3 & 22.8 & 19.3 & 41.1 & 45.4 & 43.9 & 35.3 \\
 & CQD & 7.9 & 48.7 & 68.0 & 31.7 & 1.5 & 12.9 & 13.8 & 33.9 & 38.8 & 35.9 & 29.3 \\
 & LMPNN & 11.6 & 43.9 & 62.3 & 35.6 & 6.2 & 15.9 & 19.3 & 38.3 & 39.1 & 34.4 & 30.7 \\
 & FIT & \textbf{14.4} & \textbf{53.3} & \textbf{69.5} & \textbf{42.1} & \textbf{12.5} & \textbf{24.0} & \textbf{22.8} & \textbf{41.5} & \textbf{47.5} & \textbf{45.3} & \textbf{37.3} \\
 \bottomrule
\end{tabular}
 \vspace{-1em}
\end{table}

\begin{figure*}[t]
    \centering
    \includegraphics[width=0.8\linewidth]{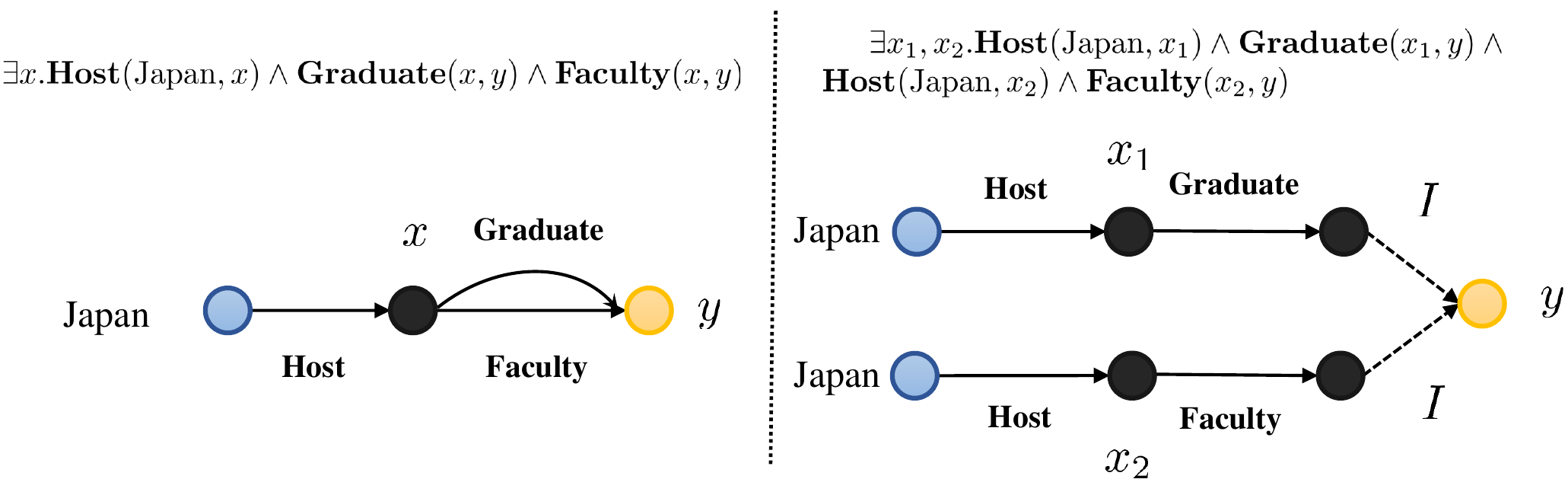}
    \caption{Left: The $\efo$ version. Right: The Tree-Form approximation. The original $x$ is split into $x_1,x_2$ in the Tree-Form query, leading to a different semantic meaning. The formal first order logic formula is also given at the top.}\label{fig:2m example}
    \vspace{-1em}
\end{figure*}

\begin{table}[t]
\vspace{-3em}
\centering
\scriptsize
\caption{MRR results(\%) of the Tree-Form queries. The average score is computed separately among positive and negative queries. The scores of CQD and LMPNN is directly taken from their paper~\citep{minervini_complex_2022,wang_logical_2023}.}
\label{tab:tree form}
\resizebox{\linewidth}{!}{
\begin{tabular}{ccccccccccccccccc}
\toprule
KG & Method & 1p & 2p & 3p & 2i & 3i & ip & pi & 2u & up & AVG.(P) & 2in & 3in & inp & pin & AVG.(N) \\ \midrule
\multirow{3}{*}{FB15k-237} & CQD & \textbf{46.7} & 13.3 & 7.9 & 34.9 & 48.6 & 20.4 & 27.1 & 17.6 & 11.5 & 25.3 &  &  &  &  &  \\
 & LMPNN & 45.9 & 13.1 & 10.3 & 34.8 & 48.9 & 17.6 & 22.7 & 13.5 & 10.3 & 24.1 & 8.7 & 12.9 & 7.7 & 4.6 & 8.5 \\
 & FIT & \textbf{46.7} & \textbf{14.6} & \textbf{12.8} & \textbf{37.5} & \textbf{51.6} & \textbf{21.9} & \textbf{30.1} & \textbf{18.0} & \textbf{13.1} & \textbf{27.4} & \textbf{14.0} & \textbf{20.0} & \textbf{10.2} & \textbf{9.5} & \textbf{13.4} \\ \midrule
\multirow{3}{*}{FB15k} & CQD & 89.2 & 65.3 & 29.7 & 77.4 & 80.6 & 71.6 & 70.6 & 72.3 & \textbf{59.4} & 68.5 &  &  &  &  &  \\
 & LMPNN & 85.0 & 39.3 & 28.6 & 68.2 & 76.5 & 43.0 & 46.7 & 36.7 & 31.4 & 50.6 & 29.1 & 29.4 & 14.9 & 10.2 & 20.9 \\
 & FIT &  \textbf{89.4} & \textbf{65.6} & \textbf{56.9} & \textbf{79.1} & \textbf{83.5} & \textbf{71.8} & \textbf{73.1} & \textbf{73.9} & 59.0 & \textbf{72.5} & \textbf{40.2} & \textbf{38.9} & \textbf{34.8} & \textbf{28.1} & \textbf{35.5} \\ \midrule
\multirow{3}{*}{NELL} & CQD & 60.4 & 22.6 & 13.6 & 43.6 & 53.0 & 25.6 & 31.2 & 19.9 & 16.7 & 31.8 &  &  &  &  &  \\
 & LMPNN & 60.6 & 22.1 & 17.5 & 40.1 & 50.3 & 24.9 & 28.4 & 17.2 & 15.7 & 30.8 & 8.5 & 10.8 & 12.2 & 3.9 & 8.9 \\
 & FIT & \textbf{60.8} & \textbf{23.8} & \textbf{21.2} & \textbf{44.3} & \textbf{54.1} & \textbf{26.6} & \textbf{31.7} & \textbf{20.3} & \textbf{17.6} & \textbf{33.4} & \textbf{12.6} & \textbf{16.4} & \textbf{15.3} & \textbf{8.3} & \textbf{13.2} \\
 \bottomrule
\end{tabular}}
\vspace{-2em}
\end{table}

\begin{table*}[t]
\centering
\scriptsize
\caption{MRR results(\%) of the deductible answers in the BetaE dataset.}
\label{tab:faithful}
\begin{tabular}{rrrrrrrrrrrrrrr}
\toprule
Formula & 1p & 2p & 3p & 2i & 3i & pi & ip & 2in & 3in & inp & pin & pni & 2u & up \\ \midrule
MRR & 100 & 100 & 100 & 100 & 100 & 100 & 100 & 75.5 & 65.3 & 65.2 & 65.7 & 90.5 & 100 & 100 \\ 
 \bottomrule
\end{tabular}
\vspace{-2em}
\end{table*}

\section{Real $\efo$ dataset}\label{sec: new dataset}
Based on our previous discussion of the limitation of query embedding methods in Section~\ref{sec: gap between tree-form and efo}, we have crafted ten real $\efo$ formulas: pni, 2il, 3il, im, 2m, 2nm, 3mp, 3pm, 3c, 3cm. We note these formulas have included all possible new properties of the $\efo$ formulas: Property~\ref{ppt:neg}(pni), Property~\ref{ppt:leaf}(2il,3il), multi-graph(im, 2m, 2nm, 3mp, 3pm, 3cm), and cyclic graph(3c, 3cm). Graphs with self-loop are not included, explained in Appendix~\ref{app:loop}. Additionally, the formula ``pni'' has already been put forward by previous dataset~\citep{ren_beta_2020}, but it is answered as the universal quantifier version as illustrated in Example~\ref{example:pni}, we maintain the query but re-sample its answer according to our definition. The query graphs of our new formulas are presented in Figure~\ref{fig:new dataset}. Its statistics are given in Appendix~\ref{app:statistics}.

\section{Experiment}
\subsection{Settings}

We evaluate our algorithm on various tasks. Firstly, we evaluate our algorithm on our new dataset of real $\efo$ queries developed in Section~\ref{sec: new dataset} and show the failure of existing methods. Secondly, we compare our algorithm with existing methods on the dataset of Tree-Form queries provided by~\citet{ren_beta_2020}.  Thirdly, we verify our claims of Theorem~\ref{thm:faithful} by evaluating also on Tree-Form. 

We follow the standard protocol, splitting the answer into two parts: \textbf{deductible} answers which can be found by the observed knowledge graph $\mathcal{G}_o$, corresponds to Section~\ref{sec: faithful experiment}, \textbf{predicted} answers that need generalization which can be found in the whole knowledge graph $\mathcal{G}$ but not from $\mathcal{G}_o$, correspond to Section~\ref{sec: EFO1 experiment} and \ref{sec: TF experiment}. All our experiments use Mean Reciprocal Rank (MRR) as the metric.

The discussion of the choice of hyperparameters, including the $t$-norm/conorm, is in Appendix~\ref{app:hyperparameter}. Regarding the running speed, please refer to Appendix~\ref{app:complexity of FIT}.

\subsection{Experiment on real $\efo$ dataset}\label{sec: EFO1 experiment}
 Here we present our result on our new proposed dataset, shown in Table~\ref{tab:new efo1}. As explained in Section~\ref{sec:tf-to-efo}, the Tree-Form query fails to represent the same syntax as our newly developed real $\efo$ queries so it can only $\textbf{syntactically approximate}$ the desired real $\efo$ query. We offer a detailed example of our new ``2m'' query in Figure~\ref{fig:2m example}, where the semantics of the left is ``Find someone who is a graduate of a Japanese University and also a faculty of the $\textbf{same}$ university'' while the right means ``Find someone who is a graduate of a Japanese University and is also a faculty of a Japanese university''. The apparent difference illustrated by this example explains why the previous method falls far behind our FIT, in all types of query and all knowledge graphs. For more details on the implementation of the baseline methods, please refer to Appendix~\ref{app:baseline implement}.


\subsection{Experiments on existing BetaE dataset (Tree-Form)}\label{sec: TF experiment}
As our FIT utilizes a pretrained neural link predictor, for fairness, we use the same checkpoint provided by CQD~\citep{minervini_complex_2022} to compare. 
We note that LMPNN~\citep{wang_logical_2023} also builds its model by the same checkpoint and adds its own new learnable parameters, thus LMPNN is also added as a baseline. Additionally, the connection to QTO is explained in Appendix~\ref{app:connection to QTO}, where we show how FIT is a pure extension to QTO.

Since CQD is not able to answer queries with negation, we split the experiment result into the positive part and the negative part. The result of the positive queries is shown in Table~\ref{tab:tree form}. We have found that our FIT algorithm surpasses both baselines in all knowledge graphs. Most significantly, when the diameter of the query graph gets bigger, like 3p queries, the advantage of our FIT are enormous. As for the result of negative queries, FIT outperforms LMPNN by a large margin in all query types.

\subsection{Experiments on faithfulness}\label{sec: faithful experiment}
To verify our Theorem~\ref{thm:faithful}, we evaluate our algorithm by the $\textbf{deductible}$ answers. The result is shown in Table~\ref{tab:faithful}. To the best of our knowledge, ~\citet{sun_faithful_2020} is the only one that considers faithfulness, we omit its result since it can only infer positive queries in which we have reached perfect faithfulness.

\section{Conclusion}

We reviewed the limitations of current query embedding methods and extended the scope of complex query answering to the whole family of $\efo$ formulas. We also developed a new dataset containing ten formulas and analyzed the new difficulties coming with these formulas. Finally, based on strict fuzzy logic theory, we present a new neural-symbolic method, FIT, which can fine-tune pretrained neural link predictor to infer arbitrary $\efo$ formula and outperform existing methods significantly.

\subsubsection*{Acknowledgments}
The authors of this paper were supported by the NSFC Fund (U20B2053) from the NSFC of China, the RIF (R6020-19 and R6021-20) and the GRF (16211520 and 16205322) from RGC of Hong Kong. We also thank the support from the UGC Research Matching Grants (RMGS20EG01-D, RMGS20CR11, RMGS20CR12, RMGS20EG19, RMGS20EG21, RMGS23CR05, RMGS23EG08). 

\bibliography{ref}
\bibliographystyle{iclr2024_conference}

\appendix



\section{Missing proofs in the main paper}

We offer all the proof of the results in the main paper in this section.

\subsection{Proof of Lemma~\ref{lem: subformula sentence} and Theorem~\ref{thm:reduce to TF}}\label{app:proof of reduce to TF}
Firstly we proof the lemma~\ref{lem: subformula sentence} by stating it very clearly as the following:

\begin{lemma}
    A query that has a sentence as its subformula is trivial and should not be considered. To be specific, suppose the formula reads $\phi(y_1,\cdots,y_n)$, then it falls into one of the two situations: 1. $\phi(y_1,\cdots,y_n)$ has a certain truth value of 1 or 0, irrelevant of the choice of $y_1,\cdots,y_n$, therefore trivial. 2. There is a subformula $\psi(y_1,\cdots,y_n)$ of the original $\phi(y_1,\cdots,y_n)$ that has the same answer set, meaning the original $\phi(y_1,\cdots,y_n)$ is redundant and should not be considered.
\end{lemma}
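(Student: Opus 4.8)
The plan is to exploit the one defining feature of a sentence: since it contains no free variable, it evaluates to a single classical truth value under the knowledge graph, and crucially this value is \emph{independent of the grounding of} $y_1,\dots,y_n$. So I would fix a sentence subformula $\sigma$ of $\phi$, let $t_\sigma\in\{\mathrm{True},\mathrm{False}\}$ be its truth value, and replace every occurrence of $\sigma$ in $\phi$ by the constant $t_\sigma$, obtaining a formula $\phi'$. Because logical equivalence is a congruence for all the constructors allowed by Definition~\ref{def:formula} (the connectives $\land,\lor,\lnot$ and the quantifier $\exists$), one has $\phi'(a_1,\dots,a_n)\equiv\phi(a_1,\dots,a_n)$ for every substitution of the free variables, and therefore $\phi$ and $\phi'$ define the same answer set. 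This step converts the lemma into a purely Boolean-plus-quantifier simplification problem about a formula carrying a constant.

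The second step is constant propagation, which I would carry out directly in the DNF of~\eqref{eq:efo-in-dnf}, where the picture is cleanest. A sentence subformula of a conjunctive clause $\gamma_j=\exists x_1\cdots x_k.\,\alpha_{j1}\land\cdots\land\alpha_{jn_j}$ is exactly a sub-conjunction that is closed under the existential quantifiers binding its variables and mentions no free variable; in query-graph language it is a union of connected components of $G(\gamma_j)$ none of which contains $y$. Thus $\gamma_j\equiv\sigma_j\land\rho_j$ with $\sigma_j$ a sentence and $\rho_j$ the remaining clause holding the free variable. Evaluating $\sigma_j$ gives two cases: if $\sigma_j$ is false then $\gamma_j$ is unsatisfiable and the whole disjunct drops out; if $\sigma_j$ is true then $\gamma_j\equiv\rho_j$, so the sentence is simply deleted.

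Finally I would assemble the dichotomy. After simplifying every disjunct, either (i) all disjuncts evaluate to the constant false, or some whole clause $\gamma_j$ is itself a true sentence forcing the value true, in which case $\phi$ has a constant truth value $0$ (answer set $\emptyset$) or $1$ (answer set $\entity$) regardless of $y_1,\dots,y_n$, giving situation~1; or (ii) a nonempty constant-free formula $\psi=\bigvee_{j\in J}\rho_j$ survives, which is a subformula of the original $\phi$ (up to the associativity and commutativity of $\lor$ and $\land$) with $\mathcal{A}[\psi]=\mathcal{A}[\phi]$, giving situation~2.

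The main obstacle I expect is not the algebra but the bookkeeping: making the notion of ``subformula'' precise enough that the surviving $\psi$ genuinely qualifies, and verifying that the case analysis stays exhaustive when several sentence components occur across different clauses or are nested inside one another. I would handle this by applying the replacement of Step~1 to a \emph{minimal} sentence subformula and inducting on the number of atoms, so that each reduction strictly shrinks the formula while preserving the answer set, until no sentence subformula remains or a constant is reached; the terminal formula is then exactly one of the two situations claimed.
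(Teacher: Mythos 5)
Your proof is correct and follows essentially the same route as the paper's: a sentence subformula has a fixed truth value independent of the free variables, and propagating that constant through each connective either collapses the whole formula to a constant (situation 1) or deletes the sentence, leaving a subformula with the same answer set (situation 2). The paper presents exactly this as a one-level case analysis on the constructor enclosing the sentence (assuming w.l.o.g.\ that the sentence is true) and appeals to the recursive definition of formulas to conclude; your version handles both truth values symmetrically and makes the induction on minimal sentence subformulas explicit, while the DNF/connected-component detour is extra bookkeeping the paper's shorter argument does not need.
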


\begin{proof}
    We prove that recursively.

    Given a sentence we call $\chi$, w.r.o.t, we assume its truth value is 1.
    
    \begin{itemize}
        \item For another query $\psi$, $\psi \land \chi$ has the same answer set with $\psi$, which falls into the second situation.
        \item For another query $\psi$, $\psi \lor \chi$ always has truth value 1, which falls into the first situation.
        \item $\lnot \chi$ has a certain truth value of 0, which falls into the first situation.
        \item $\exists x \chi$ and $\forall x \chi$ both have a certain truth value of 1 (assume the knowledge graph is not empty), which falls into the first situation. 
    \end{itemize}

    By the recursive definition of first order logic formulas, this lemma is proved.
\end{proof}

We apply this lemma to acyclic queries: 

\begin{lemma}\label{lem:meaningful query}
    With lemma~\ref{lem: subformula sentence}, if a $\efo$ query $\phi(y)$ has an acyclic query graph, then each of its constant entities should be a leaf node in the query graph.
\end{lemma}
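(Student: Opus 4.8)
The plan is to argue by contraposition: I will show that if some constant entity $a$ occupies a non-leaf position (degree at least two) in the acyclic query graph, then $\phi(y)$ necessarily contains a sentence as a subformula, so by Lemma~\ref{lem: subformula sentence} it is trivial and excluded from consideration. Hence every constant of a query that survives the filtering of Lemma~\ref{lem: subformula sentence} must be a leaf.

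First I would fix a single conjunctive disjunct $\gamma(y)$ (it suffices to treat one disjunct, since in DNF the answer set is the union over disjuncts) and a constant node $a$ of degree at least two. Because the query graph is acyclic, deleting $a$ breaks its connected component into two or more pieces, one hanging off each edge incident to $a$. The free variable $y$ lies in at most one of these pieces, so---using that $a$ has degree at least two---I can select a subtree $S$ incident to $a$ that does not contain $y$.

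Next I would isolate the subformula supported on $S$ together with $a$. The key structural fact, which follows from acyclicity, is that every existential variable occurring in $S$ occurs only in atomic formulas whose edges lie inside $S$ or on the single edge joining $a$ to $S$: such a variable cannot reappear elsewhere in the graph, for that would create a second path to $a$ and hence a cycle. Consequently the conjunction of all atoms and negated atoms carried by these edges, prefixed by the existential quantifiers binding exactly the variables of $S$, forms a well-formed subformula mentioning only the constant $a$ (possibly together with further constants sitting inside $S$) and bound variables. Since $y$ does not appear in it, this subformula has no free variable---it is a sentence. Applying Lemma~\ref{lem: subformula sentence} to it then forces $\gamma(y)$, and hence $\phi(y)$, into the trivial case, completing the contrapositive.

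The main obstacle I anticipate is the bookkeeping in the isolation step: I must check that the existential quantifier prefix can be localized to the variables of $S$ without altering the answer set, which rests entirely on the acyclicity-driven disjointness claim that the $S$-variables share no occurrence with the remainder of the query. Negated atoms require no extra care, since negation introduces no free variable and so cannot affect whether the isolated subformula is a sentence; the genuine work is confirming this variable-disjointness, which is precisely what legitimizes pushing the existential quantifiers inward to form a closed subformula.
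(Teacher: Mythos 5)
Your proposal is correct and follows essentially the same route as the paper: both identify, for a non-leaf constant $a$, the portion of the acyclic query graph hanging off $a$ away from the free variable $y$, observe that the corresponding existentially quantified conjunction is a sentence, and invoke Lemma~\ref{lem: subformula sentence} to exclude the query. Your version merely spells out more carefully the variable-disjointness argument (that acyclicity lets the existential quantifiers be localized to that subtree), which the paper's proof leaves implicit.
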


\begin{proof}
    We prove by contradiction: if there exists node $a$ which is a non-leaf node but it is also a constant entity. Then we consider the sub-tree whose root is $a$, which represents an existential formula $\psi$ without a free variable (recall that the only free variable $y$ is the root of the whole tree), thus it is a sentence. By lemma~\ref{lem: subformula sentence}, we exclude this kind of query.
\end{proof}

Then we give the proof of Theorem~\ref{thm:reduce to TF}:

\begin{proof}
Considering query $\phi(y)$ whose query graph is acyclic, we compute the spanning tree of it by using the free variable $y$ as the root.  Assumption~\ref{ass: reverse relation} ensures that each edge is from the child node to its parent node. With this representation, we prove the theorem by mathematical induction on the number of nodes in the query graph.

We start when there are two nodes in the query graph as one node can not construct an edge. Then the query must have a form of $r(a,y)$ or $\lnot r(a,y)$ by it does not have property~\ref{ppt:leaf}. Either way, it is a $\tf$ query by definition~\ref{def:TF query}.

Assuming this holds for query graphs with no more than $n\geq 2$ nodes, then we prove it for graphs with $n+1$ nodes.

Find the root $y$, using lemma~\ref{lem:meaningful query}, we know it only connects to existential variables or leaf nodes.

If $y$ connects to a leaf node $u$, we know $u$ must be an entity $a$ by not having property~\ref{ppt:leaf}. Then $\phi(y) = r(a,y)\land \psi(y)$, where $\psi(y)$ is an $\tf$ query by induction. Then using the third part of definition~\ref{def:TF query} finishes the proof.

If $y$ only connects to existential variables, find one child node $u$ with the edge $r$, we note the edge must be positive because of not having property~\ref{ppt:neg}. Then $\phi(y)$ can be decomposed as $\phi(y)=r(u,y)\land \psi(u) \land \chi(y)$, where $\psi(u)$ and $\chi(y)$ are both $\tf$ query by induction (If $y$ only connected to $u$, $\chi(y)$ is omitted). Then using the third and fourth part of definition~\ref{def:TF query} finishes the proof.

Then, the proof is finished by induction.
\end{proof}

\subsection{Proof of the complexity of tree form query}\label{app:tree form complexity}

To begin with, we note that in the construction of the current dataset, there is another assumption called ``bounded negation'' that has been proposed by~\citet{ren_beta_2020} and inherited by~\citet{wang_benchmarking_2021}, which makes sure that set complement is ``bounded'' by set intersection:

\begin{proposition}[Bounded Negation]\label{prop: bounded negation}
    In every operator tree of the previous datasets~\citep{ren_beta_2020,wang_benchmarking_2021}, every node representing ``N'' must have a parent node ``I'', which must have another child node that does not represent ``N''.
\end{proposition}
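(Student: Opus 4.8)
The plan is to prove Proposition~\ref{prop: bounded negation} by reducing it to a finite structural check over the query-type templates that generate these datasets. First I would recall that the datasets of~\citet{ren_beta_2020} and~\citet{wang_benchmarking_2021} are not produced by an open-ended recursive grammar but are sampled from a fixed, finite collection of operator-tree \emph{structures} (the named types 1p, 2p, 3p, 2i, 3i, ip, pi, 2u, up together with the negation types 2in, 3in, inp, pin, pni). A query instance is obtained by grounding such a structure -- assigning concrete entities to the leaves and relations to the projection edges -- and grounding never alters the shape of the operator tree nor the labels (P, I, U, N) of its internal nodes. Hence the structural claim ``every N node has an I parent carrying a non-N sibling'' is invariant under instantiation, so it suffices to verify it on each of the five templates that actually contain an N node.

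The second step is the verification itself. Among all templates only 2in, 3in, inp, pin, pni carry a negation, and I would inspect each in turn: in 2in and 3in the N node is one of the two (respectively three) children of the top intersection whose remaining children are positive branches; in inp and pin the N node again sits directly beneath an intersection that has a positive sibling, with the projection applied above or below that intersection; and in pni -- the type examined in Example~\ref{example:pni} -- the negated projection is intersected with a positive relation branch. In every case the parent of the N node is an I node and that I node has at least one child that is not an N node, which is exactly the asserted property. This is a small finite case analysis with no computation beyond reading off each tree shape.

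The third step is to argue exhaustiveness, which is where the only real subtlety lies. I would need to confirm that the two datasets introduce no negation pattern outside this list -- in particular that no operator tree contains an N node whose parent is a projection P or a union U, and that two N nodes are never stacked. This follows from the explicit design choice in~\citet{ren_beta_2020}, inherited by~\citet{wang_benchmarking_2021}, to realize logical negation only through the set-difference idiom $A \cap \lnot B$, which forces every complement to appear as a child of an intersection alongside a positive operand. The hard part is therefore not mathematical depth but \textbf{completeness of the case analysis}: one must be certain that the enumerated templates exhaust both datasets and that the benchmarking extension adds no further negation structures, after which the proposition follows immediately and can be fed into the complexity bound of Proposition~\ref{thm:tf}.
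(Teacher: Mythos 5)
Your proposal is correct and takes essentially the same route as the paper, which offers no explicit proof at all: it simply observes that ``bounded negation'' is a design constraint imposed by \citet{ren_beta_2020} and inherited by \citet{wang_benchmarking_2021}, so the proposition holds by construction of the datasets. Your explicit finite case analysis over the five negation templates (2in, 3in, inp, pin, pni), together with the remark that the benchmarking extension preserves the set-difference idiom, is a more careful spelling-out of exactly that justification.
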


Then, we follow the algorithm proposed in~\citet{dechter_network-based_1987}, after sorting the query as the operator tree in the DNF, which is just $O(n)$ time, as there are $n$ variables in the operator tree, there are only $n-1$ projection edges in the operator tree. We just need to prove that in computing the answer set for every variable in the operator tree, each projection edge has the complexity of $O(k^2)$. Specifically, we prove a different lemma using the mathematical induction:

\begin{lemma}
    For every variable node $u$ in the operator tree, as we have pointed out in Appendix~\ref{app:tree form derivation}, it represents the answer set of a sub Tree-Form query $\phi(u)$, we prove its answer set $A[\phi(u)]\leq k$.
\end{lemma}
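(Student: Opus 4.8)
The plan is to prove the bound by structural induction on the operator tree, following the four formation rules of Definition~\ref{def:TF query}. Since the lemma concerns a single conjunctive clause $\gamma_j$ of the DNF, the disjunction constructor never occurs inside the tree (unions appear only at the top level, combining the answer sets of distinct $\gamma_j$), so I only need to treat the atomic, projection, intersection, and negation constructors. The quantity to control is the cardinality $|A[\phi(u)]|$, and the whole argument rests on a single observation: both the atomic and the projection constructors can only produce entities that \emph{participate} in some fixed relation $r$, and the set of such participating entities has size at most $k$ by the very definition of $k$. Assumption~\ref{ass: reverse relation} is what makes this clean, since it lets the participation set be counted symmetrically over heads and tails, so the orientation of a projection edge is immaterial.

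First I would establish the base case: a node of the form $r(a,u)$ has answer set $\{b : (a,r,b)\in\mathcal{G}\}$, every element of which lies in $\{c\in\entity : \exists d.\,(c,r,d)\in\mathcal{G}\text{ or }(d,r,c)\in\mathcal{G}\}$, so its cardinality is at most $k$. The projection step (rule (iv)) is the key case and is handled identically: the node $\exists v.\, r(v,u)\wedge\phi(v)$ has answer set contained in the set of tails of $r$, hence its size is at most $k$ \emph{regardless} of how large $|A[\phi(v)]|$ is. This is exactly why projection never blows up the working sets and why each projection edge costs only $O(k^2)$. For an intersection node $(\phi\wedge\psi)(u)$, the answer set is $A[\phi(u)]\cap A[\psi(u)]$, which is no larger than either operand, so the bound follows from the inductive hypothesis applied to a single bounded child.

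The main obstacle is the negation constructor (rule (ii)), because the complement $\entity\setminus A[\phi(u)]$ can have cardinality close to $|\entity|$ and need not be bounded by $k$. I would resolve this by invoking the bounded-negation structure of the datasets (Proposition~\ref{prop: bounded negation}): every negation node has an intersection parent that possesses a non-negation sibling. Accordingly, I would state the induction hypothesis as the bound $|A[\phi(u)]|\le k$ for every node that is \emph{not} itself a negation, and treat the negation node together with its mandatory intersection parent: the parent computes $A[\psi(u)]\cap(\entity\setminus A[\phi(u)])\subseteq A[\psi(u)]$ for the bounded non-negation sibling $\psi$, which is again of size at most $k$. In this way a complement is never materialised as a standalone working set, the per-node bound propagates up the tree, and combining it with the $n-1$ projection edges yields the claimed $O(nk^2)$ complexity.
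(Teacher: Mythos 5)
Your proof is correct and follows essentially the same route as the paper's: induction up the operator tree, with the key observation that any projection through a relation $r$ lands inside the participation set of $r$ and is therefore bounded by $k$ by definition, intersections only shrink, and Proposition~\ref{prop: bounded negation} rescues the negation case by guaranteeing a positive sibling to intersect with. Your version is somewhat more explicit than the paper's about how the complement is never materialised as a standalone set, but the underlying argument is identical.
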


\begin{proof}
    For the leaf variable, since they are all constants, each one just needs $O(1)$ time to start with.

    For a variable $v$, assuming all its child nodes in the operator tree have been computed. By the Proposition~\ref{prop: bounded negation}, we know that there is always a child node $u$ such that there is a positive edge $r$ linked from $u$ to $v$ in the operator tree.
    
    We check the set projection from $u$ to $v$ has the complexity of $O(k^2)$:
    
    Given the answer set $U$ corresponding to node $u$ and satisfying that $|U|\leq k$, given the relation $r$, we prove that the set projection of $U$ satisfies that $|\{v| \exists u\in U. \textbf{ s.t. }  r(c,b)\in \mathcal{G} \}|\leq k$, this is immediately right because of the definition of $k$.

    Then by the nature of the set intersection, the size of the answer set of node $v$ is at most $k$(set union is omitted since the query is in DNF). In this way, we finish the proof by induction.
\end{proof}

Then, we restate the original proposition here and prove it:
\begin{proposition}[Adapted from~\citet{dechter_network-based_1987}]
    The optimal complexity of answering $\tf$ queries in previous datasets~\citep{ren_beta_2020,wang_benchmarking_2021} is $O(n k^2)$, where $n$ is the number of terms and $k$ is a coefficient defined by the knowledge graph
    \begin{align}
        k = \max_{r\in \relation} |\{ a\in \entity| \exists b. (a,r,b)\in \mathcal{G} \text{ or } (b, r, a) \in \mathcal{G}\}| 
    \end{align}
\end{proposition}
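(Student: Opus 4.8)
The plan is to exhibit a bottom-up evaluation algorithm on the operator tree and charge its total running time to the edges and nodes of that tree, using the size bound $|A[\phi(u)]| \le k$ already established in the preceding Lemma. First I would fix the operator-tree representation of the DNF query: producing it (orienting every edge from child to parent with the free variable as the root, and listing the set-operation nodes) costs $O(n)$ via a single traversal, and since the structure is a tree on $n$ terms it has at most $n-1$ edges, of which the set-projection edges number at most $n-1$. This reduces the whole claim to bounding the work at a single projection edge and a single set-operation node.

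The core of the argument is then an induction from the leaves upward that simultaneously maintains the invariant $|A[\phi(u)]| \le k$ and bounds the per-node cost. Leaf nodes are constants and are initialized in $O(1)$. At a set-projection edge from $u$ (with set $U$, $|U| \le k$) to $v$ through relation $r$, the projected image lies inside $\{b \in \entity : \exists a.\, (a,r,b) \in \mathcal{G}\}$, whose size is at most $k$ by the definition of the coefficient $k$; computing it amounts to testing, for each of the $\le k$ candidate targets, connectivity to some element of $U$ (at most $k$ checks), which is $O(k^2)$ per projection and re-establishes the size invariant at $v$. Conjunction and disjunction nodes are realized as intersection and union of operands each of size $\le k$, costing $O(k^2)$ (indeed $O(k)$ with hashing). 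Summing $O(k^2)$ over the $\le n-1$ projection edges and over the $O(n)$ set-operation nodes, together with the $O(n)$ preprocessing, yields the claimed $O(nk^2)$; this is precisely the complexity achieved by the adapted algorithm of \citet{dechter_network-based_1987}.

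The step I expect to be the main obstacle is the treatment of negation, since a naive complement of a set of size $\le k$ has size up to $|\entity| - k$ and would wreck both the size invariant and the time bound. This is exactly where the Bounded Negation property (Proposition~\ref{prop: bounded negation}) is indispensable: every complement node has an intersection parent with a non-negated sibling of size $\le k$, so I never materialize the full complement but instead compute the intersection-with-complement directly as a set difference against that bounded sibling, keeping the operation in $O(k^2)$ and preserving $|A[\phi(v)]| \le k$ at the parent. Verifying that this local rewriting is always available — that no complement can escape an immediately enclosing bounded intersection in any operator tree of the considered datasets — is the delicate point, and it reduces precisely to Proposition~\ref{prop: bounded negation} together with the DNF assumption. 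Once negation is handled this way, the induction closes and the complexity bound follows.
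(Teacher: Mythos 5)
Your proposal matches the paper's proof essentially step for step: the same $O(n)$ operator-tree setup, the same leaf-to-root induction maintaining the invariant $|A[\phi(u)]|\le k$ via the definition of $k$, the same $O(k^2)$ charge per projection edge, and the same reliance on the Bounded Negation property to keep complements harmless. If anything, your explicit remark that the complement must be fused with its bounded intersection sibling (rather than materialized) is slightly more careful than the paper's terse claim that complement ``takes $O(k)$ time,'' but it is the same argument.
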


\begin{proof}
    By the lemma above, we show that every projection edge is answered in $O(k^2)$ time. Since set complement, intersection, and union only takes $O(k)$ time and the number of these operations is also $O(n)$, we finish the proof that solving the whole Tree-Form query in the Constraint Satisfaction Problem way has the complexity of $O(n k^2)$.
\end{proof}

\subsection{Proof of the theoretical guarantees of FIT}\label{app: theoretical guarantee proof}

In this part, we give proof of the theoretical guarantees of FIT.

\begin{theorem}[Perfectness]
 If the matrices$\{P_r\}_{r\in \relation}$ are perfect, then the FIT algorithm is perfect, meaning: $A[\phi(y)](a)=1 \iff a\in \mathcal{A}[\phi(y)]$, $A[\phi(y)](a)=0 \iff a\notin \mathcal{A}[\phi(y)]$.
\end{theorem}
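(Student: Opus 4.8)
The plan is to split the claim into two parts: first, that the fuzzy truth value function $T$ of Definition~\ref{def:tv-formula} collapses to the classical Boolean truth value whenever the matrices are perfect; and second, that the forward-inference procedure actually computes the answer vector exactly, i.e.\ the vector FIT returns has $a$-th entry $T(\phi(a/y))$. The second part is the correctness of the graph-elimination procedure, which I would take from Appendix~\ref{app:methodology details}; granting it, the whole theorem reduces to a semantic statement about $T$. The preliminary fact I would establish is that on $\{0,1\}$ inputs every $t$-norm $\top$ agrees with Boolean conjunction and every $t$-conorm ($\bot$ and $\bot^{\star}$) agrees with Boolean disjunction: the boundary axiom $\top(1,x)=x$ together with monotonicity forces $\top(0,x)=0$, and dually $\bot(0,x)=x$ while $\bot(1,x)=1$, so the restrictions of $\top,\bot,\bot^{\star}$ to $\{0,1\}^2$ are exactly the truth tables of $\land$ and $\lor$; likewise $x\mapsto 1-x$ restricts to Boolean negation.

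Next I would prove, by structural induction following the inductive Definition~\ref{def:formula}, the invariant that for every formula all of whose free variables have been replaced by entities, hence a sentence $\chi$, we have $T(\chi)\in\{0,1\}$ and $T(\chi)=1$ if and only if $\chi$ is true in the complete graph $\mathcal{G}$. The base case $\chi=r(a,b)$ (and its negation) is precisely the definition of perfect matrices (Definition~\ref{def: perfect matrix}): $T(r(a,b))=P_r(a,b)\in\{0,1\}$ and equals $1$ iff $(a,r,b)\in\mathcal{G}$. The inductive steps for $\land,\lor,\lnot$ are immediate from the $\{0,1\}$-restriction above, since by the induction hypothesis both operands already lie in $\{0,1\}$. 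For the existential step, $T(\exists x\,\phi(x))=\bot^{\star}_{a\in\entity}T(\phi(a))$ is a finite fold of $\bot^{\star}$ because $\entity$ is finite; each $T(\phi(a))\in\{0,1\}$ by hypothesis, and since $\bot^{\star}$ agrees with $\lor$ on $\{0,1\}$ the fold evaluates to $1$ exactly when some witness $a$ makes $\phi(a)$ true, which is the Boolean semantics of $\exists$.

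Finally I would specialize the invariant to $\chi=\phi(a/y)$ for each $a\in\entity$. This yields $A[\phi(y)](a)=T(\phi(a))\in\{0,1\}$ with $A[\phi(y)](a)=1$ iff $\phi(a)$ is true, i.e.\ iff $a\in\mathcal{A}[\phi(y)]$ by Definition~\ref{def:efo1-query}; because the value always lies in $\{0,1\}$, the complementary equivalence $A[\phi(y)](a)=0\iff a\notin\mathcal{A}[\phi(y)]$ follows automatically, giving both biconditionals in the statement. Combined with the correctness of the elimination procedure, this establishes that FIT is perfect.

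I expect the main obstacle to be the dependence on the separately-established correctness of the forward inference, namely that FIT genuinely computes $T(\phi(a/y))$ rather than an approximation of it; the semantic induction itself is routine once the $\{0,1\}$-restriction of the connectives is in hand, with the only mild subtlety being that the possibly non-idempotent $t$-conorm $\bot^{\star}$ must be folded over the finite entity set, for which finiteness of $\entity$ is exactly what is needed.
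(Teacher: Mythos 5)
Your proposal is correct and follows essentially the same route as the paper, whose proof is just a one-sentence assertion (with a citation) that the fuzzy truth value of Definition~\ref{def:tv-formula} coincides with the classical truth value once all atomic truth values lie in $\{0,1\}$; your structural induction and the observation that $\top$, $\bot$, $\bot^{\star}$, and $x\mapsto 1-x$ restrict to the Boolean connectives is exactly the content the paper leaves implicit. Your explicit separation of the semantic claim from the correctness of the graph-elimination procedure (handled in Appendix~\ref{app:methodology details}) is also consistent with how the paper structures the argument.
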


\begin{proof}
    Our definition of fuzzy truth value coincides with conventional truth value when the truth value of any atomic formula is in $\{0,1\}$~\citep{thiele_t-quantifiers_1994}. Thus, if the given matrices are perfect, our computed truth value coincides with the conventional truth value defined in first order logic.
\end{proof}

\begin{theorem}[Faithfulness]
    With assumption~\ref{ass:consistent}, for any $\efo$ query $\phi(y)$ without negation, FIT reaches perfect faithfulness, meaning that every answer $a$ that can be deduced in the observed knowledge graph $\mathcal{G}_{o}$, $A[\phi(y)](a)=1$. 
\end{theorem}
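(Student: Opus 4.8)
The plan is to reduce faithfulness to the already-established perfectness result (Theorem~\ref{thm:perfect}) by a monotonicity argument that exploits the absence of negation. First I would unpack what it means for $a$ to be a \emph{deducible} answer: since $\phi(y)$ is negation-free and in DNF, $a$ is deducible in $\mathcal{G}_o$ if and only if some disjunct $\gamma_j$ admits a witnessing substitution $\nu$ of the existential variables to entities with $\nu(y)=a$ such that every atom $r(h,t)$ of $\gamma_j$ maps to an observed triple $(\nu(h),r,\nu(t))\in\mathcal{G}_o$. Equivalently, introduce the \emph{observed-perfect} matrices $\tilde P_r$ defined by $\tilde P_r(h,t)=1$ when $(h,r,t)\in\mathcal{G}_o$ and $\tilde P_r(h,t)=0$ otherwise; then deducibility of $a$ is exactly the statement that $a$ is a conventional logical answer of $\phi$ over the graph $\mathcal{G}_o$. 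Applying Theorem~\ref{thm:perfect} with $\mathcal{G}:=\mathcal{G}_o$ and the perfect matrices $\tilde P_r$, FIT run on $\{\tilde P_r\}$ returns answer-vector value $\tilde A[\phi(y)](a)=1$.

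The second step compares the two matrix families and transfers this $1$ upward. Under Assumption~\ref{ass:consistent}, every observed triple satisfies $P_r(h,t)=1$ and every unobserved triple satisfies $P_r(h,t)<1$, in particular $\geq 0$; hence $P_r(h,t)\geq \tilde P_r(h,t)$ entrywise for all $r$. I would then show the answer vector is monotone non-decreasing in the matrix entries: for a negation-free query the truth value $T(\phi(a))$ of Definition~\ref{def:tv-formula} is built only from clauses (i), (iii), (iv)---that is, from the $t$-norm $\top$, the $t$-conorm $\bot$, and the existential aggregation $\bot^{\star}$---and never invokes the complement clause (ii). Since $t$-norms and $t$-conorms are monotone non-decreasing in each argument and every entry $P_r(h,t)$ enters only positively, the map $\{P_r\}\mapsto T(\phi(a))$ is monotone non-decreasing. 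Using the correctness of FIT's forward inference (Appendix~\ref{app:methodology details}), $A[\phi(y)](a)=T(\phi(a))$, so $A[\phi(y)](a)\geq \tilde A[\phi(y)](a)=1$, and since all values lie in $[0,1]$ this forces $A[\phi(y)](a)=1$, which is exactly faithfulness.

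The delicate point---and the step I expect to be the main obstacle---is establishing monotonicity of FIT's \emph{computed} answer vector, not merely of the declarative semantics $T$. FIT does not evaluate $T$ by brute force; it deletes nodes and edges and, for cyclic query graphs, enumerates candidate groundings of a variable, so I must verify that each such step is assembled solely from the monotone operations $\top,\bot,\bot^{\star}$ and that the cycle-handling enumeration is itself a $\bot^{\star}$-aggregation over candidates (hence monotone) with no hidden complement. Provided FIT's forward pass coincides with $T$ on negation-free queries---the same equivalence underlying Theorem~\ref{thm:perfect}---monotonicity transfers and the proof closes. If one preferred to avoid invoking that equivalence, the alternative is a direct structural induction showing that the witnessing substitution $\nu$ drives the truth value to $1$ through the boundary laws $1\top 1=1$ and $1\bot^{\star} z=1$ (valid because $\bot^{\star}$ is a $t$-conorm and is monotone with $z\bot^{\star}0=z$), which is precisely where the real work of the argument would be done.
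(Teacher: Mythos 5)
Your proposal is correct and follows essentially the same route as the paper's proof: construct perfect matrices from $\mathcal{G}_o$, invoke Theorem~\ref{thm:perfect} to get truth value $1$ there, and then transfer it upward via the entrywise inequality $P_r \geq \tilde P_r$ and the monotonicity of $t$-norms and $t$-conorms, which applies precisely because the negation clause of Definition~\ref{def:tv-formula} is never used. Your added scrutiny of whether FIT's node-elimination forward pass (rather than the declarative semantics $T$) is monotone is a point the paper glosses over by identifying $A[\phi(y)](a)$ with $T(\phi(a))$ outright, but it does not change the argument.
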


\begin{proof}
    We create perfect matrices by the observed knowledge graph $\mathcal{G}_{o}$ as in Definition~\ref{def: perfect matrix}, the truth value function induced by it is $T^{\prime}$. By Theorem~\ref{thm:perfect}, answer $a$ is deductible in $\mathcal{G}_{o}$ if and only if $T^{\prime}(a)=1$. We note that $T(r(b,c))\geq T^{\prime}(r(b,c))$ for any  $r\in \relation,b,c \in \entity$, then by the non-decreasing of $t$-norm and $t$-conorm, we know that $T(\phi(a))\geq T^{\prime}(\phi(a))=1$. Thus $A[\phi(y)](a)=T(\phi(a))=1$.
\end{proof}

\section{Tree form query derivation}\label{app:tree form derivation}

In this section, we explain the derivation of the tree form query which we introduce in Definition~\ref{def:TF query}.

 Given a relation $r$, the set projection of set $B$ is $\{c| \exists b\in B. \textbf{ s.t. }  r(b,c)\in \mathcal{G} \}$.

For $\phi(y)=r(a,y)$, $ \mathcal{A}[\phi(y)] = \{b|r(a,b)\in \mathcal{G} \}$, which is projection of a single entity.

For general projection, 

\begin{align*}
    & a \in \mathcal{A}[\exists  y.r(y,y^{\prime})\land  \phi(y)]  \\
    \iff &  \exists b \in \entity. (T(r(b,a)) = 1) \land (T(\phi(b))=1)  \\ 
    \iff &  \exists b \in \entity. r(b,a)\in \mathcal{G} \land b \in \mathcal{A}[\phi(y)] 
\end{align*}

Thus we know that $\mathcal{A}[\exists  y.r(y,y^{\prime})\land  \phi(y)]$ is projection of set $\mathcal{A}[\phi(y)]$ with relation $r$.

The derivation of negation:
\begin{align*}
     &a \in \mathcal{A}[\lnot \phi(y)] \\
     \iff &T(\lnot \phi(a/y)) = 1 \\
     \iff &T(\phi(a/y)) = 0 \\
     \iff &a \notin \mathcal{A}[\phi(y)] \\
     \iff &a \in \overline{\mathcal{A}[\phi(y)]}
\end{align*}
where $\overline{\mathcal{A}[\phi(y)]}$ represents the complement set of $\mathcal{A}[\phi(y)]$. And we know $\mathcal{A}[\lnot \phi(y)]=\overline{\mathcal{A}[\phi(y)]}$.

The derivation of conjunction is given as follows:
\begin{align*}
    & a \in \mathcal{A}[(\phi \land \psi)(y)]  \\
    \iff &  T((\phi \land \psi)(a)) = 1 \\ 
    \iff &  (T(\phi(a)) = 1) \land (T(\psi(a)) = 1) \\
    \iff & a \in \mathcal{A}[\phi(y)] \land a \in \mathcal{A}[\psi(y)] 
\end{align*}

Thus we know $\mathcal{A}[(\phi \land \psi)(y)] = \mathcal{A}[\phi(y)] \cap  \mathcal{A}[\psi(y)]$. The derivation of disjunction is the same.

\section{$t$-norm introduction}\label{app:t-norm introduction}
\begin{definition}[$t$-norm]
    A $t$-norm $\top$ is a function: [0,1] x [0,1] $\rightarrow$ [0,1] that satisfies the following properties:
    \begin{itemize}[leftmargin =*,]
        \item[(i)] Communitavity: $a\top b = b\top a$
        \item[(ii)] Monotonicity: $(a\top b) \leq  (c\top b)$ if $a\leq c$
        \item[(iii)] Associativity: $(a\top b)\top c= a\top (b\top c)$
        \item[(iv)] Neutrality: $a\top 1 =a$ 
    \end{itemize}
\end{definition}

Then the $t$-conorm $\bot$ is directly defined by $a\bot b = 1 - (1-a)\top(1-b)$, which follows the De Morgan's law.

Finally, we introduce some common $t$-norms which are of interest:

\begin{itemize}
    \item[(i)]  Godel: $a\top_{G} b = \text{min}(a,b)$
    \item[(ii)] Product: $a\top_{P} b = a*b$
    \item[(iii)] Łukasiewicz: $a\top_{LK} b = \text{max}(a+b-1,0)$
\end{itemize}

In the main paper, we mainly focus on the Godel and Product $t$-norm.

\section{FIT methodology details}\label{app:methodology details}
In this section, we give the details of the proof used in Section~\ref{sec:methodology}. We elaborate the Fuzzy Inference with Truth Value (FIT) algorithm by the direct derivation from our previous truth value definition. We discuss how to derive the answer vector of conjunctive queries $\gamma_j(y)$. The answer vector of the whole formula can then be computed by Definition~\ref{def:tv-formula}. The main idea is to cut node and edge step by step and infer on a smaller and smaller query graph.

To begin with, we define a new membership predicate (denoted as $\mu$) between a variable $u$ and a vector $C_u$ representing its fuzzy set, which leads to the truth value of whether entity $a$ belongs to a fuzzy set.
\begin{definition}
    Given an entity $a$ and a vector $C\in [0,1]^{|\entity|}$, the truth value of membership predicate $\mu$ is $T(\mu(a, C)) = C(a)$, namely, the $a$-th coordinate of the vector $C$.
\end{definition}
Equipped with those definitions, we are allowed to build our algorithm on a solid fuzzy logic background~\citep{klir_fuzzy_1995}.

\subsection{Step 1: Initialization}
We initialize the fuzzy vectors for every non-entity node in the query graph $G(\gamma_j)$:
for node $u$ indicates either an existential variable or the free variable, we initialize $C_u=\mathbf{1}$ by an all-one fuzzy vector. The original query is updated as $ \gamma(y) \land \mu(u, C_{u})$.

Let the derived query be $\phi(y)$, which is $\gamma(y)$ conjuncted with membership predicates from all non-entity nodes.

\subsection{Step 2: Remove self-loops}

For any node $u$ that has self-loop, we discuss two situations, (1) $u$ is the answer node, and (2) $u$ is not the answer node. We note that the self-loop of a constant entity makes no sense.

\noindent\textbf{Case1:Node $u=y$ is the answer node.} 
In this case, we randomly select one of its self-loop edges, if it is positive, representing $r(y,y)$, putting every formula that contains $y$ forehead, the query formula $\phi(y)$ reads

\begin{align*}
    \phi(y)=\mu(y,C_{y}) \land r(y,y) \land \psi(y)
\end{align*}

where $\psi(y)$ is a sub formula.

The answer vector $A[\phi(y)]$ is inferred by evaluating the truth value $T(\phi(a/y))$, then for all $a\in \entity$, 

\begin{align*}
    A[\phi(y)](a)& =  T(\phi(a)) = T(\mu(y,C_{y}) \land r(y,y) \land \psi(y)) \\
    &=  C_{y}(a) \top P_{r}(a,a) \top T(\psi(a))
\end{align*}

 By introducing $\odot^{\top}$ as applying $t$-norm $\top$ element-wise, we have the following vector form:
 
\begin{align*}
    A[\phi(y)]&=  (C_{y} \odot^{\top} \diag(P_{r}))  \odot^{\top} A[(\psi(y))] \\
    &= \mu(y,C_{y}\odot^{\top} \diag(P_{r}))\land \psi(y) 
\end{align*}

By this, we show that the self-loop edge can be removed as long as we update the corresponding fuzzy vector simultaneously.

If the edge is negative, meaning it represents a formula $\lnot r(y,y)$, the derivation is similar:

\begin{align*}
    A[\phi(y)](a)& =  T(\phi(a)) = T( \mu(a,C_{y}) \land \lnot r(a,a)\land \psi(a)) \\
    &=  C_{y}(a) \top (1 - P_{r}(a,a)) \top T(\psi(a)) \\
     A[\phi(y)]& = A[  \mu(y,C_{y}\odot^{\top}(1-\diag(P_{r})))\land \psi(y)]
\end{align*}
 
Thus, $A[\phi(y)]$ can be derived once we infer the answer vector of $A[\psi(y)]$, where $\psi(y)$ is the simplified, inner formula.

\noindent\textbf{Case 2:If $u$ represents an existential variable $x$.} 
Without loss of generality,
we assume there is only $n$ positive self-loops $r_1(x, x),\cdots ,r_n(x,x)$. Then, the formula reads $\phi(y)=\exists x. \mu(x, C_{x}) \land r_{1}(x,x) \land \cdots 
\land r_n(x,x) \land \psi(y; x)$, where $\psi(y;x)$ is an existential formula with the free variable $y$ and existential variable $x$ which does not have sekf-loop anymore. For $a \in \entity$, we have:
\begin{align*}
    &T(\phi(a))\\
    &=\bot^{\star}_{b\in \entity} [T(\mu(b,C_{x})\land r_{1}(b,b) \land \cdots \land r_{n}(b,b) \land \psi(a; b)] \\
    & = \bot^{\star}_{b\in \entity} [C_{x}(b)\top P_{r_{1}}(b,b) \top \cdots \top P_{r_{n}}(b,b)\top T(\psi(a;b))] \\
    & = \bot^{\star}_{b\in \entity} [C^{\prime}_{x}(b)\top T(\psi(a;b))]
    \\
    & = T(\exists x. \mu(x,C^{\prime}_{x}) \land \psi(a/y;x))
\end{align*}
where $C^{\prime}_{x} = C_{x} \odot^{\top} \diag(P_{r_{1}}) \odot^{\top} \cdots \odot^{\top}  \diag(P_{r_{n}})$. Therefore, we can remove multiple self-loops similarly.


By the end of this step, all self-loops are removed.


\subsection{Step 3: Remove constant entity}

If there is a node that represents an entity $a$, we can cut these edges from $a$ to other nodes easily.

Considering all nodes that are connected to $a$, there are two situations:(1)$a$ connects to the free variable $y$, and (2) $a$ connects to an existential variable $x$. 

\noindent\textbf{Case 1: The edge connects $a$  to the free variable $y$} 

As the scenario of negative edge and multi-edge has been discussed, without loss of generality, we can assume there is only one edge from $a$ to $y$ and one edge from $y$ to $a$, which are both positive.  We note that we get rid of the Assumption~\ref{ass: reverse relation} naturally. 
The query formula $\phi(y)$ reads
\begin{align*}
    \phi(y)= \mu(y,C_{y}) \land r_{1}(a,y) \land r_2(y,a) \land \psi(y)
\end{align*}
where $\psi(y)$ is a sub formula.
\begin{align*}
    A[\phi(y)](b)&=T(\mu(b,C_{y}) \land r_1(a,b) \land r_2(b,a)\land\psi(b)) \\
    & = C_{y}(b)\top P_{r_1}(a,b)\top P_{r_2}^{\intercal}(a,b) \top T(\psi(b))\\
    & = C_{y}^{\prime}(b)\top T(\psi(b))  = A[C_{y}^{\prime}(y) \land \psi(y)](b)
\end{align*}
where $C_{y}^{\prime}=C_{y} \odot^{\top} P_{r_1}(a) \odot^{\top} P_{r_2}^{\intercal}(a)$. We also show that the inverse relations can be naturally tackled by the transpose of the predicate matrix.

\noindent\textbf{Case 2:The edge connects $a$ to an existential variable $x$.} 

W.r.o.t, we can assume there is only one edge from $a$ to $x$, the derivation is similar with the one before: 
\begin{align*}
    \phi(y) & = \mu(x,C_{x}) \land r(a,x) \land \psi(y;x) \\
    A[\phi(y)](b)  
    & = \bot^{\star}_{x\in \entity}[T(\mu(x,C_{x}) \land r(a,b)\land \psi(b;c))]\\ 
    & = \bot^{\star}_{c\in \entity}[C_{x}(c)\top P_{r}(a,c)\top T(\psi(b;c))] \\
    & = \bot^{\star}_{c\in \entity}[C_{x}^{\prime}(c)\top T(\psi(b;c))] \\
    & = A[\mu(x,C_{x}^{\prime}\land \psi(y))](b) 
\end{align*}

In this way, all edges connected with $a$ is removed, and then node $a$ is also removed. By the end of this step, all nodes in the query graph represents constants are removed as well as these corresponding edges. 

\subsection{Step 4: Cutting leaf node}
In a query graph, we say a node $u$ is a leaf if it only connects to one other node $v$. If there is a leaf node $u$ in the query graph, We show that we are able to efficiently cut the leaf node and do the inference on the remaining graph, w.r.o.t, we can assume there is only one positive edge from $u$ to $v$.  

There are three cases in this step:

\noindent\textbf{Case 1:$u$ and $v$ represent two existential variables.} 



Similarly, w.l.o.g., we can assume the formula reads:
\begin{align*}
   \phi(y) = \exists x_1,x_2. \mu(x_1, C_{x_1}) \land r(x_1,x_2)  \land \mu(x_2, C_{x_{2}}) \land \psi(x_{2},y) 
\end{align*}

We want to update the candidate vector $C_{x_2}$ instead of really enumerating all possible assignments of $x_1$. That is to say, we want to find $C_{x_2}^{\prime}$ such that for every possible $a\in \entity$, it satisfies:
\begin{equation*}
    \begin{aligned}
        T(\phi(a)) = T(\exists x_2. \mu(x_2, C_{x_2}^{\prime})\land \psi(x_2,a)) 
    \end{aligned}    
\end{equation*}

For simplification, we define an operation $\circ^{\top}$ that for matrix $M,N$ and vector $v$, $M \circ^{\top} v = N \iff N(i,j) = M(i,j)\top V(j)$, let $C^{\star}=  [(P_{r} \circ^{\top} C_{x_2})^{\intercal} \circ^{\top} C_{x_1}]^{\intercal}$, and use the commutativity of $t$-conorm, the original formula can be reformulated as:
\begin{equation}\label{eq:two existential}
    \begin{aligned}
        T(\phi(a)) &= \bot^{\star}_{x_1=b,x_2=c} [C_{x_1}(b) \top P_{r}(b, c) \top C_{x_2}(c) \top T(\psi(c,a))] \\
     & = \bot^{\star}_{x_2=c} \{\bot^{\star}_{x_1=b} [C^{\star}(b,c) \top T(\psi(c,a))]\} 
    \end{aligned}
\end{equation}
And the desired form is 
\begin{equation}\label{eq:one existential}
    T(\exists x_2. \mu(x_2, C_{x_2}^{\prime})\land \psi(x_2,a)) = \bot^{\star}_{x_2=c}[C_{x_2}^{\prime}(c) \top T(\psi(c,a))]
\end{equation}
Compare equation~\ref{eq:two existential} and ~\ref{eq:one existential},  since  $T(\psi(c,a))$ is completely unknown, we want it hold for every $a,c$ that:
\begin{align*}
    \bot^{\star}_{x_1=b} [C^{\star}(b,c) \top T(\psi(c,a))] =  C_{x_2}^{\prime}(c) \top T(\psi(c,a))
\end{align*}

We note this can not be done by arbitrary $t$-conorm. However, if the $\bot^{\star}$ is Godel, namely max, then by the nondecreasing of $t$-conorm we have an important conclusion:
\begin{equation}\label{eq: max exchange}
    \max_{b} [C^{\star}(b,c) \top T(\psi(c,a))] = \max_{b} [C^{\star}(b,c)] \top T(\psi(c,a))
\end{equation}

Then we finally get desired $C_{x_2}^{\prime}(c)=\max_{b} [C^{\star}(b,c)]$.

\noindent\textbf{Case 2:$u$ is the free variable $y$, $v$ is existential variable $x$.} 

Then, the formula $\phi(y)$ reads:
\begin{align*}
    \phi(y)&=\mu(y, C_y)  \land [\exists x. r(x,y)  \land \psi(x)]
\end{align*} where $\psi(x)$ is a formula with $x$ being the only free variable.
\begin{align*}
    A[\phi(y)](a) &  = C_y(a) \top T(\exists x. r(x,a)  \land \psi(x)) \\
    & = C_y(a) \top \{\bot^{\star}_{x=b} [P_{r}(b,a) \top \psi(b)]\}
\end{align*}
If we define
\begin{align*}
    A^{\star} = P_{r}  \circ^{\top} A[\psi(x)]
\end{align*}
where $\circ^{\top}$ is defined samely as in main paper.
We can have the simplification:
\begin{align*}
    A[\phi(y)](a) & = C_y(a) \top [\bot^{\star}_{x=b} A^{\star}(b,a)]
\end{align*}

In this way, we can derive the final answer vector $A[\phi(y)]$ by computing the answer vector $A[\psi(x)]$ first.

\noindent\textbf{Case 3:If $u$ is an existential variable $x$ and $v$ is the free variable $y$.} 
We also assume there is only one edge from $u$ to $v$:

The formula $\phi(y)$ reads:
\begin{align*}
    \phi(y) = \exists x.\mu(x,C_{x}) \land r(x,y) \land \mu(y,C_y) \land \psi(y)
\end{align*}
The derivation is as follows:
\begin{align*}
    A[\phi(y)](a)=T(\phi(a)) & = \bot^{\star}_{x=b} [ C_{x}(b)  \top P_{r}(b, a) \top C_{y}(a)  \top T(\psi(a))] \\
    & = \bot^{\star}_{x=b} [C_{y}^{\star}(b,a) \top T(\psi(a))]
\end{align*}
where $C_{y}^{\star}=  [(P_{r} \circ^{\top} C_{y})^{\intercal} \circ^{\top} C_{x}]^{\intercal}$.

Then we found that this is the same as the situation in equation~\ref{eq: max exchange} which we have explained  already. 

To be specific, provided $\bot^{\star}$ is Godel, we have the following derivation;
\begin{align*}
    A[\phi(y)](a) & = \max_{x=b} [C_{y}^{\star}(b,a) \top T(\psi(a))] \\ 
    & = \max_{x=b}[C_{y}^{\star}(b,a)] \top T(\psi(a))
\end{align*}

Let $C_{y}^{\prime}(a)=\max_{x=b}[C_{y}^{\star}(b,a)]$ do the trick.

 By the end of this step, all leaf nodes are removed.

\subsection{Step 5: Enumeration on circle}

We deal with the circle in the query graph: the only technique is cutting one node $x$ and doing the enumeration: $T(\exists x. \phi(x))=\bot^{\star}_{a\in \entity} T(\phi(a))$.

We choose a node $u$ that cutting it can still keep the remaining graph connected, whose existence is guaranteed by graph theory~\citep{gallier_discrete_2011}.

In practice, we can set a hyperparameter $M$ to limit the maximum number of enumerations, by the assumption~\ref{ass:consistent}, we can distinguish observed nodes from predicted ones. For node $u$, we sort all its candidates and enumerate $|\{a\in \entity|C_{u}(a)=1\}| + M$ the most possible ones.

In this step, we choose a node in the query graph to become a constant entity, which allows for returning back to step 3 again. In this way, the query graph becomes smaller and smaller.

\subsection{Step 6: Getting answer vector}

 As all steps assure the connectivity of the query graph, finally, the query graph will only contain the free variable $y$, with the only remaining formula being $\mu(y, C_y)$, then by definition, its answer vector will be $C_y$.

Additionally, we offer the pseudo-code for our FIT algorithm in Algorithm~\ref{alg: FIT whole} and Algorithm~\ref{alg: FIT conjunctive}. We also note that the design of the FIT algorithm is versatile and can be extended to a more complicated version~\citep{fei_soft_2024}.

\begin{algorithm}[t]
\caption{FIT algorithm on any $\efo$ formulas, where FITC is FIT computed on a query graph, explained in Algorithm\ref{alg: FIT conjunctive}.}
\label{alg: FIT whole}
\SetKwFunction{FITC}{\textsc{FITC}}
\KwIn{$\efo$ query $\phi(y)$, Relation matrices $\{P_r\}_{r \in \relation}$}
Change the EFO1 query to DNF, $\phi_{\rm DNF}(y)=  \gamma_1(y) \lor \dots \lor \gamma_m(y)$\;

\For{$\gamma_i$ in $\phi_{\rm DNF}(y)$}
{
    For the conjunctive query $\gamma_i(y)$, create its query graph $G_{\gamma_{i}}$\;
    Do the initialization, get $\{C_u \}_{u\in G_{\gamma_{i}}}$ \;
    Compute each answer $A_{\gamma_{i}}$ =\FITC($G_{\gamma_{i}}, \{C_u\}_{u\in G_{\gamma_{i}}}$) \;
}
 Aggregate the sub answers and get the final answer  $A[\phi(y)]=\bot_{i}[A_{\gamma_{i}}]$\;
 \KwOut{$A[\phi(y)]$}
\end{algorithm}

\begin{algorithm}[t] 
\caption{FIT on a conjunctive query, which is represented by a query graph. We name it FITC for short.}

\SetKwFunction{FITC}{\textsc{FITC}}
\FITC{$G,\{C_u\}_{u\in G}$}{ 

\If{G contains only one node y\label{alg: FIT conjunctive}}
{
\KwRet{$C_y$}
}

\If{G contains a node $u$ with self loop edges.}
{
    Remove the self loop edges and changes $C_u$ simultaneously as explained in Step 2\;
}
\If{G contains constant entity node.}
{
    Remove the constant entity node as explained in Step 3\;
}

\If{G contains a leaf node $u$ which only connects to node $v$} 
{  \If{$u$ represents free variable, $v$ represents an existential variable.}
    {
        Change $v$ to free variable, removing the node $u$, the new query graph is $G^{\prime}$ \;
        sub answer =  \FITC($G^{\prime}$, $\{C_u\}_{u\in G^{\prime}}$) \;
        final answer is computed by sub answer as Step 4, case 2\;
        \KwRet{final answer}
    }
    \ElseIf{$u$ represents an existential variable, $v$ represents the free variable.}
    {
        Update $C_v$ according to Step 4, case 3, getting $C^{\prime}$ \;
        Remove the node $u$, the query graph is $G^{\prime}$ \;
        \KwRet{\FITC($G^{\prime}$, $\{C^{\prime}_u\}_{u\in G^{\prime}}$)}
    }
    \Else
    {
        Update $C_v$ according to Step 4, case 1, getting $C^{\prime}$ \;
        Remove the node $u$, the query graph is $G^{\prime}$ \;
        \KwRet{\FITC($G^{\prime}$, $\{C^{\prime}_u\}_{u\in G^{\prime}}$)}
    }
}
\Else
{
    Find a node $u$ to enumerate, according to step 5 \;
    Construct the candidate list for node $u$, namely the top N biggest index of $C_u$, where $N=M +\{a\in \entity | C_u(a)=1\}$  \;
    Create a matrix $E\in [0,1]^{N*|\entity|}$ to store enumerate answer \;
    \For{The $i$th candidate of node $u$, $a$}
    {
        Store the original truth value of $C_u(a)$ \;
        Change the $C_u$ to the one-hot vector $C_u^{\prime}=\mathbf{1}_{\{x=a\}}$ \;
        Change the node $u$ to represent a constant entity in query graph, creating $G^{\prime}$ \; 
        Enumerate answer A = \FITC($G^{\prime}$, $C^{\prime}$) \;
        $E[i]=A * C_u(a)$
    }
    \KwRet{$\bot^{\star}_{i}[E(i,j)]$}
}
}
\end{algorithm}

\section{Complexity of FIT algorithm}\label{app:complexity of FIT}
We discuss the complexity of FIT algorithm here. For FIT on a general query graph, it is clearly NP-complete due to our discussion in Section~\ref{sec:challenge}. Specifically, FIT has the complexity of $O(|\entity|^n)$, where n is the number of the variable in the query graph. In the worst case, the query graph has no constants and is a complete graph~\footnote{Complete graph means that every node in the graph connects to every other node in the graph.}, and FIT degrades to enumeration because of circles.

However, as we have also discussed in Section~\ref{sec:challenge}, Tree-Form queries are known to be tractable and we would like to discuss FIT complexity in the $\tf \cap \efo$ queries, the result is given as the following:

\begin{lemma}
    For a query graph with $n$ variables, brutal-force implementation of FIT has a complexity of $O(n\times|\entity|^2)$.
\end{lemma}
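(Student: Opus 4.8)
The plan is to bound the cost of FIT on a tree-form query graph (a simple acyclic graph) by accounting for the work done at each of the six steps of the algorithm, and showing that the dominant contribution is the per-variable elimination of leaf nodes. Since the query graph is acyclic with $n$ variables, Step~5 (enumeration on a circle) is never invoked, so FIT proceeds purely by the deterministic reductions of Steps~2--4. The key observation is that removing a single leaf node $u$ (Case~1 or Case~3 of Step~4) requires computing an auxiliary matrix such as $C^{\star}=[(P_{r}\circ^{\top}C_{x_2})^{\intercal}\circ^{\top}C_{x_1}]^{\intercal}$ followed by a column-wise $\max$ (Godel $\bot^{\star}$), each of which touches every entry of an $|\entity|\times|\entity|$ matrix and therefore costs $O(|\entity|^2)$.

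First I would fix the order of elimination: using the free variable $y$ as root, repeatedly strip leaves. I would argue that each of the $n$ variable nodes is removed exactly once, and that the removal of each leaf triggers exactly one matrix-level operation of cost $O(|\entity|^2)$ (the $\circ^{\top}$ products and the element-wise $t$-norm $\odot^{\top}$, plus the maximization over one index). Steps that update a fuzzy vector $C_u$ against a diagonal (self-loops, Step~2) or against a single row/column of a relation matrix (constant removal, Step~3) cost only $O(|\entity|)$ per operation, so they are dominated. Hence the total is $n$ leaf-removals each at $O(|\entity|^2)$, giving $O(n\times|\entity|^2)$.

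The main obstacle I anticipate is bookkeeping rather than a deep idea: I must verify that the number of matrix-valued operations is genuinely linear in $n$ and does not secretly blow up. In particular, I would need to check that multi-edges and negated edges between a fixed pair of nodes are absorbed into the single $C_u$-update (as the derivations in Steps~2--4 show, negation only replaces $P_r$ by $1-P_r$ and multiple edges compose by further $\odot^{\top}$ operations), so they contribute at most a constant factor per node under the standing assumption that the schema is fixed. I would also confirm that Step~1 (initialization of $n$ all-one vectors) and Step~6 (reading off $C_y$) are each $O(n\,|\entity|)$ and thus absorbed. The word ``brutal-force'' signals that we are not optimizing the constant, only establishing the $O(n\,|\entity|^2)$ envelope, which makes the argument a direct cost-accounting over the elimination sequence rather than a tighter amortized analysis.

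<br>

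A subtle point worth flagging is that acyclicity of the query graph is exactly what guarantees the leaf-stripping recursion terminates after $n$ steps without ever entering the enumeration branch; I would state this reliance explicitly and appeal to the earlier characterization (Theorem~\ref{thm:reduce to TF}) that the relevant queries have simple acyclic graphs, so that the $O(n\times|\entity|^2)$ bound applies precisely on the $\tf\cap\efo$ class under discussion.
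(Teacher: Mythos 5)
Your proposal is correct and follows essentially the same route as the paper: both identify the $O(|\entity|^2)$ computation of $C^{\star}$ in the leaf-removal step as the bottleneck, note that the other steps are dominated, and multiply by the $n$ node removals. Your version merely adds some explicit bookkeeping (acyclicity excluding Step~5, multi-edges absorbing into constant factors) that the paper leaves implicit.
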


\begin{proof}
    We prove this lemma easily by noticing that the complexity in Step 3 and Step 4 is all we need to consider. We note that the bottleneck of the complexity comes from the computation of $C_{y}^{\star}=  [(P_{r} \circ^{\top} C_{y})^{\intercal} \circ^{\top} C_{x}]^{\intercal}$, which is $O(|\entity|^2)$, while the other computations have no larger complexity. Because it takes $O(|\entity|^2)$ to remove one node in the query graph, the total complexity is $O(n\times|\entity|^2)$.
\end{proof}

\begin{proposition}
    For a query graph with $n$ variables, an efficient implementation of FIT has a complexity of $O(n\times t^2)$, where t is a coefficient defined by the sparsity of the matrix.
    \begin{align}
        t = \max_{r\in \relation} |\{ a\in \entity| \exists b. P_r(a,b) > 0 \text{ or } P_r(b,a) > 0 \}| 
    \end{align}
\end{proposition}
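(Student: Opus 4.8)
The plan is to mirror the structure of the brute-force lemma above and of the tree-form complexity result (Proposition~\ref{thm:tf}), replacing the graph coefficient $k$ by its fuzzy analogue $t$ and showing that sparsity lets every node-removal step run in $O(t^2)$ rather than $O(|\entity|^2)$. First I would fix the data structures: store each relation matrix $P_r$ in sparse form, i.e. as the list of its strictly positive entries. By definition of $t$, all positive entries of $P_r$ lie inside the block indexed by its \emph{active set} $S_r = \{a \in \entity : \exists b.\, P_r(a,b) > 0 \text{ or } P_r(b,a) > 0\}$, which has size at most $t$, so $P_r$ has at most $t^2$ nonzeros. Each fuzzy vector $C_u$ is likewise stored sparsely once it has been touched.

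The crux is a support invariant: after the first edge incident to a node $u$ has been processed, $C_u$ is supported on at most $t$ entities. This follows from the behavior of the $t$-norm at $0$: since $x \top 0 \le 1 \top 0 = 0$ by monotonicity and neutrality, we have $x \top 0 = 0$, hence any projection through a relation $r$, e.g. the bottleneck quantity $C_y^\star = [(P_r \circ^\top C_y)^\intercal \circ^\top C_x]^\intercal$ of Steps~3--4, vanishes identically outside $S_r$. Consequently the vector produced by reducing $C_y^\star$ with $\bot^\star$ (Gödel, so that the exchange~\eqref{eq: max exchange} is valid) is supported on the $\le t$ tails lying in $S_r$. The all-ones initialization does not break this: by neutrality $1 \top x = x$, an untouched vector only ever enters an operation multiplied against a sparse relation matrix, so one needs to read it at only the $\le t$ active coordinates, and the result it participates in is again $\le t$-supported.

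Given the invariant, I would bound the per-step cost. Iterating only over the $\le t^2$ positive entries of $P_r$, the dominant computation $C_y^\star$ costs $O(t^2)$; the self-loop update of Step~2 touches only the diagonal and costs $O(t)$; element-wise conjunctions of already-projected $\le t$-supported vectors cost $O(t)$. Thus removing any single node is $O(t^2)$. Finally I would count the removals: in the tractable $\tf \cap \efo$ regime the query graph is simple and acyclic (Theorem~\ref{thm:reduce to TF}), with constant nodes forced to be leaves (Lemma~\ref{lem:meaningful query}), so there are $O(n)$ nodes and each is eliminated once, for a total of $O(n\,t^2)$.

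The main obstacle will be making the support invariant airtight across all the cases of Steps~2--4 --- in particular handling a node with several incident edges, where one must check that the intersections combining the contributions of distinct edges never force an $\Omega(|\entity|)$ read, and confirming that an all-ones vector is always ``absorbed'' by a sparse projection on its first use rather than being combined densely with another all-ones vector. I would also stress that the bound is genuinely restricted to this acyclic regime: once the graph has a cycle, Step~5 reintroduces enumeration and the cost reverts to the $O(|\entity|^n)$ of the general case, consistent with the complexity gap identified in Section~\ref{sec:challenge}.
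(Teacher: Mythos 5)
Your proposal is correct and follows essentially the same route as the paper's own proof, which likewise reduces each $P_r$ to a $t\times t$ dense block and argues that every fuzzy vector $C_u$ arising in the computation has at most $t$ positive entries, giving $O(t^2)$ per node removal and $O(n\,t^2)$ overall. Your support invariant and the explicit handling of the all-ones initialization are just a more careful spelling-out of the sparsity claim the paper asserts in one line.
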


\begin{proof}
    The proof is straightforward by the same proof technique used in Appendix~\ref{app:tree form complexity}. By the definition of t, every matrix of relation $r$ can be simplified as an $t*t$ dense matrix, and every $C_u$ in the computation process is guaranteed to have no more than $t$ elements that are larger than 0. Then, the optimal complexity of FIT is $O(n\times t^2)$. 
\end{proof}

Finally, we provide readers with the empirical result of the running speed of our FIT algorithm, the result is shown in Table~\ref{tab: FIT running time}. The experiment is done on a single NVIDIA A100 GPU (40GB).

\begin{table}[t]
\centering
\scriptsize
\caption{The running speed of FIT on the $\tf$ and the $\efo$ queries on FB15k-237, the metric is ms/query, namely how many milliseconds one query needs to infer. The experiment is done on a NVIDIA A100 GPU (40GB).}
\label{tab: FIT running time}
\begin{tabular}{lllllllllllllll}
\toprule
  && & pni  & 2il  & 3il  & 2m   & 2nm  & 3mp  & 3pm  & im   & 3c     & 3cm    & AVG.    \\ 

          & &   & 27.7 & 11.2 & 12.6 & 17.2 & 43.2 & 16.8 & 17.4 & 17.1 & 298    & 340    & 80.12  \\ \midrule

  1p   & 2p   & 3p    & 2i   & 3i & ip   & pi   & 2u   & up   & 2in  & 3in  & inp  & pin  & AVG.        \\
 11.1 & 12.2 & 14.23 & 11.7 & 13 & 13.7 & 13.2 & 11.5 & 13.5 & 14.3 & 14.5 & 32.4 & 20.4 & 15.06 \\ \bottomrule
\end{tabular}

\end{table}

\section{Training and testing details}\label{app:matrix}

We utilize pretrained neural link predictors provided by~\citet{minervini_complex_2022}, who followed previous knowledge graph embeddings method~\citep{trouillon_complex_2016} which gives a score $s(h, a, b)$ for every possible triple$(a,r,b)$, where $a,b\in \entity, r\in \relation$. To convert real number scores to truth value that falls into $[0, 1]$, we use the softmax function:

$$P_{r,a}^{\star}(b) = \frac{exp(s(a,r,b))}{\Sigma_{c\in \entity} exp(s(a,r,c))}$$

However, we notice that this implies that there's only one tail node for $(a,r)$, thus we compute the scaling factor by utilizing the observed edges in the observed graph $\mathcal{G}_o$. We define the set $E_{a,r} = \{b\mid (a,r,b)\in \mathcal{G}_o\}$.

\[
    Q_{a,b}= 
\begin{cases}
    \frac{|E_{a,r}|}{\Sigma_{c\in E_{a,r}} P_{r,a}^{\star}(c)},& \text{if } |E_{a,r}|> 0\\
    1,              & \text{if } |E_{a,r}|= 0
\end{cases}
\]

For training, we just clamp the value for each triple:

\[
    P_{r}(a,b)= min(1, P_{r,a}^{\star}(b) * Q_{a,b})
\]

However, in testing, we compute the final truth value by combining the computed probability with the required property in Assumption~\ref{ass:consistent}:

\begin{equation}\label{eq: test matrix}
        P_{r}(a,b)= 
\begin{cases}
    1, & \text{if } b \in E_{a,r}\\
    0,              & \text{if } P_{r,a}^{\star}(b) * Q_{a,b}< \epsilon \\
    \min(P_{r,a}^{\star}(b) * Q_{a,b}, 1- \delta), & \text{otherwise}
\end{cases}
\end{equation}

We note that $\epsilon, \delta$ are hyper-parameters, $\epsilon$ acts as a threshold which ensures for every $r$, $P_r$ is a sparse matrix and $\delta>0$ is required to meet our Assumption~\ref{ass:consistent}.

Once the $P_r$ is a sparse matrix, our complexity discussion in Appendix~\ref{app:complexity of FIT} applies. However, there are some query types that do not require the matrix to be sparse, namely the 1p, 2i, 3i, 2in, and 3in, since there is no existential variable in them. Therefore, the computation becomes simple as we only need to compute one row in the whole matrix. 

Therefore, when dealing with these five query types, we just use the dense matrix:

\begin{equation}
        \overline{P}_{r}(a,b)= 
\begin{cases}
    1, & \text{if } b \in E_{a,r}\\
    \min(P_{r,a}^{\star}(b) * Q_{a,b}, 1- \delta), & \text{otherwise}
\end{cases}
\end{equation}

For the training, we only use 1p, 2i, 3i, 2in, and 3in query types for efficiency, the reason is explained above. The learning rate is set to 0.0001, the batch size is set to 64, the maximum training step is set to 5,000 steps and we choose the best checkpoints by the scores in the validation set. The choice of the hyperparameter is in Appendix~\ref{app:hyperparameter}.

\section{Connections to previous methods}
In this section, we discussion the connections between FIT and previous methods.

\subsection{Connection to traditional method}\label{app: connection to traditional}

It has been traditionally found that acyclicity is the key to tractability: researchers from two different backgrounds, namely the databases theory~\citep{yannakakis_algorithms_1981} and constraint satisfaction problem~\citep{dechter_network-based_1987} has all found out that conjunctive queries can be evaluated in polynomial time if it is acyclic. However, their results are both in classical settings, where the truth value can either be 0 or 1, while our setting is extremely general, not only do we have a probabilistic value for each triple, but our Definition~\ref{def:tv-formula} is the most general one which allows for two arbitrary t-norms be included in the computation. Moreover, we offer efficient computation that is purely vectorized, which is provided in detail in Appendix~\ref{app:methodology details}. Most significantly, FIT is a neural-symbolic method, the neural link predictor can be fine-tuned by complex query to further address the OWA, while the traditional methods can never \textbf{predict} any answer that is not observed in $\mathcal{G}_o$, which is why machine learning techniques should step in.

\subsection{Connections to QTO}\label{app:connection to QTO}

We note that QTO~\citep{bai_answering_2023} is only a simplified version of FIT, in other words, FIT is a natural extension of QTO in the sense that FIT is capable of solving $\efo$ queries that QTO fails to represent syntactically because QTO relies on the operator-tree method. Moreover, please see the proposition below for the inference of queries within  $\tf \cap \efo$.

\begin{proposition}
    FIT can coincide with QTO when answering queries within the $\tf \cap \efo$ family.
\end{proposition}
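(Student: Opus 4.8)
The plan is to fix matching hyperparameters---the product $t$-norm for $\top$ and the G\"odel $t$-conorm (i.e.\ $\max$) for $\bot^{\star}$, together with the same neural matrices $\{P_r\}_{r\in\relation}$---and then show that under these choices FIT's forward inference reduces exactly to QTO's bottom-up pass over its computation tree. First I would recall QTO's scheme on a $\tf\cap\efo$ query: it proceeds from the leaves of the computation tree, applying a $\max$-product matrix--vector operation at each projection edge and an element-wise product at each intersection node, with negation realized as $1-(\cdot)$ exactly as in Definition~\ref{def:tv-formula}. The goal is to exhibit a step-by-step correspondence between these primitives and FIT's graph-reduction steps.

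The backbone of the argument is a structural fact already available to us: by Theorem~\ref{thm:reduce to TF}, together with Lemma~\ref{lem:meaningful query}, any query in $\tf\cap\efo$ has a simple acyclic query graph in which every constant entity is a leaf. Consequently FIT never invokes Step~5 (enumeration on a circle); it only removes constants (Step~3) and prunes leaves of the tree (Step~4). I would then argue by induction on the number of nodes of the query graph, rooting the spanning tree at the free variable $y$ exactly as in the proof of Theorem~\ref{thm:reduce to TF}. The induction hypothesis is that for every subtree rooted at a variable $u$, the fuzzy vector $C_u$ computed by FIT equals the vector QTO assigns to the corresponding node of its computation tree.

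For the inductive step I would match the relevant FIT operations to QTO's primitives. Removing a constant leaf (Step~3) seeds the projected vector from which QTO starts at a grounded node; the conjunction of the membership predicates feeding a node is an element-wise product, which is precisely QTO's intersection; and pruning an existential leaf $x$ along edge $r$ into its neighbor (Step~4, Cases~2 and~3) updates the neighbor's vector by $\max_{b}\bigl[(P_r\circ^{\top}C)(b,\cdot)\bigr]$, which is exactly QTO's set-projection step. The crux is that this $\max$-product projection commutes with the as-yet-uncomputed remainder of the formula: this is the identity \eqref{eq: max exchange}, valid precisely because $\bot^{\star}$ is G\"odel, and it guarantees that eliminating a leaf leaves the value of the residual graph unchanged. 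Chaining these identities up the tree then yields identical answer vectors, and the top-level disjunction is aggregated by $\bot$ identically on both sides.

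I expect the main obstacle to be aligning the two formalisms rather than any deep mathematics. QTO is phrased as an \emph{optimization} over the computation tree, so I would first need to verify that its optimum is attained by the closed-form $\max$-product recursion above---this is where the G\"odel choice of $\bot^{\star}$ is essential, and where any discrepancy in QTO's matrix calibration versus our construction would surface and would have to be assumed away. Once both procedures are rewritten as the same recursion on a rooted tree, the equivalence is a routine induction; the care is concentrated entirely in (i) matching the operator tree's explicit union/intersection/negation nodes to FIT's implicit membership-vector conjunctions, and (ii) confirming that the reduction from general $\efo$ to this tree form does not silently introduce a step that QTO cannot mirror.
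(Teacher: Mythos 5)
Your proposal is correct in substance but takes a genuinely different route from the paper. The paper's proof is semantic and very short: it notes that QTO's optimization objective on a conjunctive query, $\max_{x_1,\dots,x_n} T(\alpha_1)\top\cdots\top T(\alpha_m)$, is \emph{literally} the fuzzy truth value of Definition~\ref{def:tv-formula} once $\bot^{\star}$ is G\"odel and $\top$ is product, and since FIT computes exactly that truth value by construction (the step-by-step derivation in Appendix~\ref{app:methodology details} preserves the answer vector), the two outputs must agree whenever the matrices and hyperparameters match. You instead establish an operational equivalence: an induction on the rooted query tree matching each FIT reduction step (constant removal, leaf pruning via the max-product exchange in equation~\eqref{eq: max exchange}) to the corresponding QTO primitive. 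Your route is more laborious but more self-contained, since it does not invoke FIT's correctness derivation as a black box, and your observation that Step~5 (enumeration) is never triggered on $\tf\cap\efo$ queries is a genuine structural point the paper leaves implicit. Conversely, the paper's route is shorter and sidesteps the need to verify that QTO's optimum is attained by the closed-form recursion, which you correctly flag as the delicate point of your approach. One concrete detail you gesture at but do not pin down: the paper's third matching condition is that for queries with no existential variable (1p, 2i, 3i, 2in, 3in) FIT must use the sparse test-time matrix of equation~\eqref{eq: test matrix} rather than the dense one, since the two calibrations differ by the $\epsilon$-thresholding; without fixing this your "assumed away" discrepancy would in fact break exact coincidence on those query types.
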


\begin{proof}
    Since QTO also converts all formulas in DNF, we are allowed to consider only conjunctive queries. Consider a conjunctive query $$\gamma=\exists x_1,\cdots, x_n. \alpha_1 \land \cdots \land \alpha_m $$ where $\alpha_i$ is and atomic formula or its negation. Then, we note that for the objectives of QTO, it aims to find free variable $y$ such that it maximizes:
    \begin{equation}
        \phi(y) = \max_{x_1,\cdots, x_n} T(\alpha_1) \top \cdots \top T(\alpha_m) = \bot^{\star}_{x_1,\cdots, x_n} [T(\alpha_1) \top \cdots \top T(\alpha_m)]
    \end{equation}

where the $\bot^{\star}$ is Godel t-conorm, and the truth of value is negation is also defined by $T(\lnot \phi) =1 - T(\phi)$ in QTO~\citep{bai_answering_2023}. By the analysis, we show that these optimization objectives naturally coincide with the definition of truth value we proposed systematically in Definition~\ref{def:tv-formula}. Thus, QTO and FIT yield the same result in $\tf \cap \efo$ queries as long as these requirements are met: 1. The same learning-base matrix is given with no further fine-tuning, 2. FIT chooses Godel to be its existential $t$-conorm and product to be its conjunctive $t$-norm, and 3: In queries with no existential variable, FIT also uses a sparse matrix rather than the dense ones.
\end{proof}

We have also done experiments to verify this proposition, we show the result in Table~\ref{tab: QTO in tree-form}, in which we show they coincide with each other. In this way, we show that QTO is at best a special case of FIT, and FIT serves as a superior alternative to QTO in all $\efo$ queries.

\begin{table}[t]
\centering
\scriptsize
\caption{The MRR results(\%) of FIT versus QTO on the $\tf \cap\efo$ queries (BetaE dataset) on FB15k-237.}
\label{tab: QTO in tree-form}
\begin{tabular}{lllllllllllllll}
\toprule
Method & 1p & 2p & 3p & 2i & 3i & ip & pi & 2u & up & 2in & 3in & inp & pin & AVG. \\ \midrule
FIT - QTO & 0 & 0 & 0 & 0 & 0 & 0 & 0 & 0 & 0 & 0 & 0 & 0 & 0 & 0 \\
\bottomrule
\end{tabular}
\end{table}

\section{Hyperparameter impact}\label{app:hyperparameter}

In this section, we discuss several hyperparameters used in the FIT algorithm.

For the threshold $\epsilon$, we use 0.005 for both FB15k-237 and FB15k, 0.0002 for NELL. For the $\delta$, it is 0.001 in all datasets. For $M$, it is 10 in all datasets. As for the tnorm, We choose Godel (maximum) to represent the existential quantifier as discussed in equation~\ref{eq: max exchange}, and we choose the product to represent logic conjunction because of its clear probability interpretation, which also follows the previous research~\citep{arakelyan_complex_2020}. Then we study the impact of each hyper-parameter one by one.

Here we present the impact of the hyperparameter, the result is shown in Table~\ref{tab:hyper parameter}.

As shown, the impact of $\delta$ is vert marginal, because very few predicted answers have comparable scores with the observed answers. As for the $\epsilon$, the smaller, the better, which is kind of obvious because the larger threshold monotonically leads to a sparser matrix that loses more information about the sorting of the predicted tail variables. Moreover, we note when inferring 1p,2i,3i,2in, and 3in queries, we do not use the sparse matrix, therefore $\epsilon$ have no impact on these 5 query types. 

\begin{table*}[t]
\centering
\tiny
\caption{Matrix related hyper parameter impact on the MRR results(\%) on the BetaE dataset on FB15k-237.}
\label{tab:hyper parameter}
\begin{tabular}{lllllllllllllll}
\bottomrule
$\epsilon$, $\delta$ & 1p & 2p & 3p & 2i & 3i & ip & pi & 2in & 3in & inp & pin & 2u & up & AVG. \\ \midrule
0.002, 0.001 & \textbf{46.70} & \textbf{14.65} & \textbf{12.87} & 37.52 & \textbf{51.55} & \textbf{22.19} & \textbf{30.46} & \textbf{13.99} & 19.99 & \textbf{10.18} & 9.54 & \textbf{18.04} & 13.09 & \textbf{23.14} \\
0.005, 0.001 & \textbf{46.70} & 14.61 & 12.80 & 37.52 & \textbf{51.55} & 21.88 & 30.10 & \textbf{13.99} & 19.99 & \textbf{10.18} & \textbf{9.55} & \textbf{18.04} & 13.08 & 23.08  \\
0.005, 0.01 & \textbf{46.70} & 14.62 & 12.80 & \textbf{37.54} & 51.54 & 21.88 & 30.10 & \textbf{13.99} & \textbf{20.01} & 10.17 & 9.53 & \textbf{18.04} & \textbf{13.09} & 23.08 \\
0.01, 0.001 & \textbf{46.70} & 14.49 & 12.74 & 37.52 & \textbf{51.55} & 21.57 & 29.37 & \textbf{13.99} & 19.99 & 10.17 & 9.51 & \textbf{18.04} & 13.07 & 22.98
\\ \bottomrule
\end{tabular}
\end{table*}

Then we also discuss the impact of setting the maximum number of enumerations, the result is shown in Table~\ref{tab:hyper parameter m}, where we can see that bigger $M$ leads to better performance, which is intuitively true because taking more possible candidates for inner variables into consideration is beneficial for getting the final answer.

\begin{table}[t]
\centering
\scriptsize
\caption{Max enumeration impact on the MRR results(\%) on the circle query on FB15k-237.}
\label{tab:hyper parameter m}
\begin{tabular}{llll}
\toprule
M & 3c & 3cm & AVG. \\ \midrule
5 & 39.2 & 37.1 & 38.2 \\
10 & 39.4 & 37.3 & 38.4 \\
20 & 39.5 & 37.5 & 38.5 \\ \bottomrule
\end{tabular}
\end{table}

\begin{table}[t]
\centering
\tiny
\caption{The impact of changing conjunction $t$-norm on the MRR results(\%) on FB15k-237.}
\label{tab:other t-norm}
\begin{tabular}{lcllllllllllllll}
\toprule
Method & \multicolumn{1}{l}{Query Type} & pni & 2il & 3il & 2m & 2nm & 3mp & 3pm & im & 3c & 3cm & AVG. &  &  &  \\
Product & \multirow{2}{*}{EFO1} & \textbf{14.9} & \textbf{34.2} & \textbf{51.4} & \textbf{9.9} & \textbf{12.7} & \textbf{19.6} & \textbf{11.9} & \textbf{7.7} & \textbf{39.4} & \textbf{37.3} & \textbf{23.9}  &  &  &  \\
Godel &  &  14.2 & 33.6 & 49.3 & 8.4 & 11.8 & 14.9 & 10.7 & 6.0 & 35.5 & 33.3 & 21.8 &  &  &  \\ \midrule
Method & \multicolumn{1}{l}{Query Type} & 1p & 2p & 3p & 2i & 3i & ip & pi & 2in & 3in & inp & pin & 2u & up & AVG. \\
Product & \multirow{2}{*}{Tree-Form} & \textbf{46.7} & \textbf{14.6} & \textbf{12.8} & \textbf{37.5} & \textbf{51.6} & \textbf{21.9} & \textbf{30.1} & 14.0 & 20.0 & \textbf{10.2} & \textbf{9.5} & \textbf{18.0} & \textbf{13.1} & \textbf{23.1} \\
Godel &  & \textbf{46.7} & 14.1 & 12.0 & 35.1 & 44.7 & 19.7 & 28.3 & \textbf{14.1} & \textbf{20.3} & 10.0 & \textbf{9.5} & 17.9 & 12.5 & 21.9 \\
\bottomrule
\end{tabular}
\end{table}

Finally, we show that it's also possible to choose other $t$-norms for the conjunction, using Godel as the conjunction $t$-norm also has its own advantage in some queries though it is slightly worse than the product on average, shown in Table~\ref{tab:other t-norm}. However, as our computation in equation~\ref{eq:two existential} shows, using Godel as the existential $t$-norm is the only practical way for allowing FIT to compute efficiently.

\begin{table*}[t]
\centering
\scriptsize
\caption{Statistics of our new proposed real $\efo$ dataset.}
\label{tab:statistics}
\begin{tabular}{lllllllllll}
\bottomrule
Knowledge   Graph & pni & 2il & 3il &  2m & 2nm & 3mp & 3pm & im & 3c & 3cm  \\ \midrule
FB15k-237 & $5\times 10^3$ & $5\times 10^3$ & $5\times 10^3$ & $5\times 10^3$ & $5\times 10^3$ & $5\times 10^3$ & $5\times 10^3$ & $5\times 10^3$ & $5\times 10^3$ & $5\times 10^3$  \\
FB15k & $8\times 10^3$  & $5\times 10^3$ & $5\times 10^3$ & $5\times 10^3$ & $5\times 10^3$ & $5\times 10^3$ & $5\times 10^3$ & $5\times 10^3$ & $5\times 10^3$ & $5\times 10^3$  \\
NELL & $4\times 10^3$ & $5\times 10^3$ & $5\times 10^3$ & $5\times 10^3$ & $5\times 10^3$ & $5\times 10^3$ & $5\times 10^3$ & $5\times 10^3$ & $5\times 10^3$ & $5\times 10^3$  \\ 
\bottomrule
\end{tabular}
\end{table*}

\section{Self loop}\label{app:loop}
Though this circumstance is allowed in logic and explained in our Section~\ref{sec:methodology}, we find that it is rare in real-world KG like NELL.  In fact, there are only four relations in NELL that have self-loop. In FB15k and FB15k-237, most of the self-loop triples are redundant like ``New York is located at New York''. Thus, we do not sample formulas with self-loop, which is also inherited by other work~\citep{yin_textefo_k-cqa_2023,yin_meta_2024}.

\section{Dataset statistics}\label{app:statistics}

The number of queries in the newly developed real $\efo$ dataset is shown in Table~\ref{tab:statistics}. We note that the ``pni'' data retains the previous queries sampled by~\citet{ren_beta_2020}, thus, we retain the number of queries to be different in different datasets. For those brand new queries proposed by ourselves, we sampled 5000 for each query type in each dataset.

For the formal first order  formulas of our newly developed real $\efo$ dataset, we also offer them in the following Table~\ref{tab:formal efo1 formula of dataset}.

\begin{table}[t]
\centering
\scriptsize
\caption{Formal definition of the new proposed dataset, where $a_i\in \entity$ are entities from knowledge graph, $x_i$ are bounded existential variables, and $y$ is the single free variable.}
\label{tab:formal efo1 formula of dataset}
\begin{tabular}{ll}
\toprule
Name & $\efo$ Formula                                                                           \\ \midrule
pni  & $\exists x. r_1(a_1,x)\land \lnot r_2(x,y)\land r_3(a_2,y)
 $                                                   \\
2il  & $\exists x. r_1(a,y) \land r_2(x,y)$                                  \\
3il  & $\exists x. r_1(a_1,y)\land r_2(a_2,y)\land r_3(x,y)$                                              \\
2m   & $\exists x.r_1(a,x)) \land r_2(x,y)\land r_3(x,y)$                                            \\
2nm  & $\exists x. r_1(a,x)\land r_2(x,y)\land \lnot r_3(x,y)$      \\
3mp  & $\exists x_1,x_2. r_1(a,x_1)\land r_2(x_1,x_2)\land r_3(x_2,y) \land r_4(x_1,x_2)$                             \\
3pm  & $\exists x_1,x_2. r_1(a,x_1)\land r_2(x_1,x_2) \land r_3(x_2,y)\land r_4(x_2,y)$                              \\
im   & $\exists x. r_1(a_1,x)\land r_2(a_2,x) \land r_3(x,y) \land r_4(x,y)$                              \\
3c   &  $\exists x_1,x_2. r_1(a_1,x_1) \land r_2(x_1,y) \land r_3(a_2,x_2) \land r_4(x_2,y) \land r_5(x_1,x_2)$               \\
3cm  & $\exists x_1,x_2. r_1(a_1,x_1) \land r_2(x_1,y) \land r_3(a_2,x_2) \land r_4(x_2,y) \land r_5(x_1,x_2) \land r_6(x_1,y)$ \\
\bottomrule
\end{tabular}
\end{table}

\section{Baseline implementation}\label{app:baseline implement}

In this section, we illustrate the implementation of the baseline method, as we have explained in Section~\ref{sec:tf-to-efo}, previous methods can not directly represent our newly developed dataset. One example in Figure~\ref{fig:2m example} illustrates how the Tree-Form queries can \textbf{approximate} our newly developed queries to their best.

For those methods that depend on the operator tree, which include BetaE~\citep{ren_beta_2020}, LogicE~\citep{luus_logic_2021},  ConE~\citep{zhang_cone_2021}, and QTO~\citep{bai_answering_2023}. similarly with the example in Figure~\ref{fig:2m example}, we manually create a new operator tree to approximate each of our new formulas, which is explained in Table~\ref{tab:lisp language} and Table~\ref{tab:first order formula for tree form approximation}. Because of the limitation of the operator tree form, these formulas are only approximations, we also offer the real logical formula it represents, making the difference with the real $\efo$ formulas in Table~\ref{tab:formal efo1 formula of dataset} more apparent. For the method that utilizes query graph~\citep{minervini_complex_2022,wang_logical_2023}, we use our query graph with their provided checkpoint directly.

\begin{table}[t]
\centering
\scriptsize
\caption{The lisp-like formula for the approximation of our newly developed dataset, it is invented by~\citet{wang_benchmarking_2021} to express any kind of Tree-Form query.}
\label{tab:lisp language}
\begin{tabular}{ll}
\toprule
Name & Lisp-like formula                                    \\
\midrule
pni  & (i,(n,(p,(p,(e)))),(p,(e)))                              \\
2il  & (p,(e))                                             \\
3il  & (i,(p,(e)),(p,(e)))                                 \\
2m   & (i,(p,(p,(e))),(p,(p,(e))))                          \\
2nm  & (i,(n,(p,(p,(e)))),(p,(p,(e))))                        \\
3mp  & (p,(i,(p,(p,(e))),(p,(p,(e)))))                         \\
3pm  & (i,(p,(p,(p,(e)))),(p,(p,(p,(e)))))                 \\
im   & (i,(p,(i,(p,(e)),(p,(e)))),(p,(i,(p,(e)),(p,(e)))))      \\
3c   & (i,(p,(i,(p,(e)),(p,(p,(e))))),(p,(p,(e))))                 \\
3cm  & (i,(i,(p,(p,(e))),(p,(p,(e)))),(p,(i,(p,(e)),(p,(p,(e))))))  \\
\bottomrule
\end{tabular}
\end{table}

\begin{table}[t]
\centering
\scriptsize
\caption{The formal first order logic formula for the approximation of our newly developed dataset.}
\label{tab:first order formula for tree form approximation}
\begin{tabular}{ll}
\toprule
Name  &  Tree Form formula                                        \\
\midrule
pni  &    $\forall x. r_2(b, y)\land (\neg r_1(x, y) \lor \neg r_3(a, x))$                            \\
2il  &        $r(a,y) $                                \\
3il  &        $r_1(a,y)\land r_2(b,y)$                              \\
2m   &       $\exists x_1,x_2. r_1(a,x_1) \land r_2(x_1,y) \land r_1(a,x_2) \land r_3(x_2, y)$                \\
2nm  &   $\exists x_1, \forall x_2. r_1(a,x_1) \land r_2(x_1,y) \land (\neg r_1(a,x_2) \lor \neg r_3(x_2, y))$                  \\
3mp  &    $\exists x_1, x_2, x_3. r_1(a,x_1)\land r_2(x_1,x_2) \land r_1(a,x_3) \land r_4(x_3, x_2)) \land r_3(x_2,y)$                 \\
3pm  &       $\exists x_1, x_2, x_3, x_4. r_1(a,x_1)\land r_2(x_1,x_2) \land r_3(x_2,y) \land r_1(a,x_3) \land r_2(x_3,x_4) \land r_4(x_4,y)$              \\
im   &    $\exists x_1, x_2. r_1(a_1,x_1)\land r_2(a_2,x_1)\land r_3(x_1,y)\land r_1(a_1,x_2)\land r_2(a_2,x_2)\land r_4(x_2,y)$     \\
3c   &  $\exists x_1,x_2,x_3. [r_1(a_1,x_1)\land r_2(x_1,y)] \land [ r_3(a_2,x_2) \land r_1(a_1,x_3)\land r_5(x_3,x_2) \land r4(x_2,y)] $         \\
3cm  & $\exists x_1,x_2,x_3,x_4. [r_1(a_1,x_1)\land r_2(x_1,y) \land r_1(a_1,x_4) \land r_6(x_4,y)]$  $  \land r_3(a_2,x_2) \land r_1(a_1,x_3)\land r_5(x_3,x_2) \land r4(x_2,y) $ \\
\bottomrule
\end{tabular}
\end{table}

\end{document}